%% LyX 2.3.6 created this file.  For more info, see http://www.lyx.org/.
%% Do not edit unless you really know what you are doing.
\documentclass[oneside,reqno,english]{amsart}
\usepackage[T1]{fontenc}
\usepackage[latin9]{inputenc}
\setcounter{secnumdepth}{2}
\setcounter{tocdepth}{2}
\synctex=-1
\usepackage{babel}
\usepackage{array}
\usepackage{refstyle}
\usepackage{float}
\usepackage{mathrsfs}
\usepackage{mathtools}
\usepackage{amstext}
\usepackage{amsthm}
\usepackage{amssymb}
\usepackage{stackrel}
\usepackage{graphicx}
\usepackage[all]{xy}
\usepackage[unicode=true,pdfusetitle,
 bookmarks=true,bookmarksnumbered=false,bookmarksopen=false,
 breaklinks=false,pdfborder={0 0 0},pdfborderstyle={},backref=false,colorlinks=false]
 {hyperref}
\hypersetup{
 colorlinks=true,citecolor=blue,linkcolor=blue,linktocpage=true}

\makeatletter

%%%%%%%%%%%%%%%%%%%%%%%%%%%%%% LyX specific LaTeX commands.

\AtBeginDocument{\providecommand\figref[1]{\ref{fig:#1}}}
\AtBeginDocument{\providecommand\thmref[1]{\ref{thm:#1}}}
\AtBeginDocument{\providecommand\secref[1]{\ref{sec:#1}}}
\AtBeginDocument{\providecommand\corref[1]{\ref{cor:#1}}}
\AtBeginDocument{\providecommand\lemref[1]{\ref{lem:#1}}}
%% Because html converters don't know tabularnewline
\providecommand{\tabularnewline}{\\}
\RS@ifundefined{subsecref}
  {\newref{subsec}{name = \RSsectxt}}
  {}
\RS@ifundefined{thmref}
  {\def\RSthmtxt{theorem~}\newref{thm}{name = \RSthmtxt}}
  {}
\RS@ifundefined{lemref}
  {\def\RSlemtxt{lemma~}\newref{lem}{name = \RSlemtxt}}
  {}

%%%%%%%%%%%%%%%%%%%%%%%%%%%%%% Textclass specific LaTeX commands.
\numberwithin{equation}{section}
\numberwithin{figure}{section}
\theoremstyle{plain}
\newtheorem{thm}{\protect\theoremname}[section]
\theoremstyle{plain}
\newtheorem{lem}[thm]{\protect\lemmaname}
\theoremstyle{remark}
\newtheorem{rem}[thm]{\protect\remarkname}
\theoremstyle{definition}
\newtheorem{example}[thm]{\protect\examplename}
\theoremstyle{definition}
\newtheorem*{problem*}{\protect\problemname}
\theoremstyle{plain}
\newtheorem{cor}[thm]{\protect\corollaryname}
\theoremstyle{remark}
\newtheorem*{note*}{\protect\notename}
\theoremstyle{definition}
\newtheorem{defn}[thm]{\protect\definitionname}

%%%%%%%%%%%%%%%%%%%%%%%%%%%%%% User specified LaTeX commands.

\providecommand{\MR}[1]{}

\usepackage{bbm}
\allowdisplaybreaks
\usepackage{needspace}
\usepackage{refstyle}

\newref{lem}{refcmd={Lemma \ref{#1}}}
\newref{thm}{refcmd={Theorem \ref{#1}}}
\newref{cor}{refcmd={Corollary \ref{#1}}}
\newref{sec}{refcmd={Section \ref{#1}}}
\newref{subsec}{refcmd={Section \ref{#1}}}
\newref{chap}{refcmd={Chapter \ref{#1}}}
\newref{prop}{refcmd={Proposition \ref{#1}}}
\newref{exa}{refcmd={Example \ref{#1}}}
\newref{tab}{refcmd={Table \ref{#1}}}
\newref{rem}{refcmd={Remark \ref{#1}}}
\newref{def}{refcmd={Definition \ref{#1}}}
\newref{fig}{refcmd={Figure \ref{#1}}}
\newref{claim}{refcmd={Claim \ref{#1}}}

\makeatletter
\newcommand{\xyR}[1]{%
\makeatletter
\xydef@\xymatrixrowsep@{#1}
\makeatother
}

\makeatletter
\newcommand{\xyC}[1]{%
\makeatletter
\xydef@\xymatrixcolsep@{#1}
\makeatother
}

\@ifundefined{showcaptionsetup}{}{%
 \PassOptionsToPackage{caption=false}{subfig}}
\usepackage{subfig}
\makeatother

\providecommand{\corollaryname}{Corollary}
\providecommand{\definitionname}{Definition}
\providecommand{\examplename}{Example}
\providecommand{\lemmaname}{Lemma}
\providecommand{\notename}{Note}
\providecommand{\problemname}{Problem}
\providecommand{\remarkname}{Remark}
\providecommand{\theoremname}{Theorem}

\begin{document}
\title{Operator theory, kernels, and feedforward neural networks}
\dedicatory{Dedicated to the memory of Professor Ka-Sing Lau.}
\author{Palle E.T. Jorgensen}
\address{(Palle E.T. Jorgensen) Department of Mathematics, The University of
Iowa, Iowa City, IA 52242-1419, U.S.A.}
\email{palle-jorgensen@uiowa.edu}
\author{Myung-Sin Song}
\address{(Myung-Sin Song) Department of Mathematics and Statistics, Southern
Illinois University Edwardsville, Edwardsville, IL 62026, USA}
\email{msong@siue.edu}
\author{James Tian}
\address{(James F. Tian) Mathematical Reviews, 416 4th Street Ann Arbor, MI
48103-4816, U.S.A.}
\email{jft@ams.org}
\begin{abstract}
In this paper we show how specific families of positive definite kernels
serve as powerful tools in analyses of iteration algorithms for multiple
layer feedforward Neural Network models. Our focus is on particular
kernels that adapt well to learning algorithms for data-sets/features
which display intrinsic self-similarities at feedforward iterations
of scaling.
\end{abstract}

\keywords{algorithms, multipliers, spectral resolutions, normal operators, iterated
function systems, fractal measures, feedforward neural network, explicit
kernels, ReLU, reproducing kernel Hilbert spaces, positive definite
kernels, composition operators.}
\subjclass[2000]{41A30, 46E22, 47B32, 68T07, 92B20.}

\maketitle
\tableofcontents{}

\section{Introduction}

Recently many authors have offered diverse approaches to feedforward
Neural Network (NN) algorithms \cite{MR4505410,MR4505203,MR4496811,MR4491367},
as well as optimization terms based on kernels. Here we establish
some new results in operator theory, and we bring them to bear on
the problem. The list of applications of feedforward NN models includes
a variety of machine learning settings, and deep NN based on kernels
\cite{MR4505882,MR4503771,MR4492099,MR4500409,MR4376564,MR4268857,MR4134776,MR4131039}.

A common theme in feedforward NN models is specific prescribed iterations
which entail (i) ReLu functions \cite{MR4473797,MR4409717,MR4399726,MR4476907,MR4468133,MR4458444},
(ii) substitution from prescribes systems of affine mappings. Moreover,
(iii) each step is then linked to the next with a choice of an activation
function. In this paper we show that there are natural positive definite
kernels associated with the three steps going into feedforward NN
constructions, as well as to their iteration. We believe that this
then yields a more direct tool for kernel-based feedforward NN models.
This advantage of our approach is based on two facts. First, we identify
a direct notion of kernel iteration which accounts for traditional
function theoretic feedforward NN steps. Secondly, our approach offers
a more direct and natural choices of kernels which govern approximations
involved in deep NN models, for example graph NN constructions.

While positive definite kernels and their associated reproducing kernel
Hilbert spaces have found diverse applications in pure and applied
mathematics, we shall focus here on a new role of kernels in feedforward
network models. In more detail, the main purpose of our paper is a
presentation of choices of particular families of positive definite
kernels which serve as powerful tools in analyses of multiple layer
feedforward Neural Networks.

In general, reproducing kernel constructions, and the corresponding
RKHSs are powerful tools in diverse applications. In the present framework
of kernel neural networks (KNNs) , their role may be summarized as
follows: Starting with the problem at hand, when we build our RKHS$(\mu)$
via IFS iterations (e.g., via Cantor-like fractal limits), then the
Cantor-like $\mu$ activation functions arise as relative reproducing
dipole functions for RKHS$(\mu)$ as in \figref{rmu} below.

\section{\label{sec:nn}Neural networks (NN), and reproducing kernel Hilbert
spaces (RKHS)}

A main theme in our paper is a development of new tools for design
of feedforward Neural Network constructs. For this purpose we point
out the use of positive definite kernels, and associated generating
function for the NN algorithms. These kernel based functions include
the more familiar ReLu functions, see Theorems \ref{thm:hk} and \ref{thm:hmu}
below. We stress that the particular RKHS constructs will be relative
in the sense of Theorems \ref{thm:hk} and \ref{thm:hmu}, i.e., the
inner product reproduces differences of function values. 

Our approach to the use of kernels and functions for feedforward Neural
Network (NN) algorithms, is based on a systematic study of two classes
of operators. They act as follows: (i) between prescribed kernel Hilbert
spaces, and (ii) other operators acting at indexed levels in the network,
i.e., operators acting at fixed levels, so within choices of kernel
Hilbert spaces. Case (i) includes a systematic study of composition
operators (see Corollaries \ref{cor:b7} and \ref{lem:b9}) in the
context of kernel Hilbert spaces; and case (ii), the study of multiplier
operators and their adjoints, see e.g., \thmref{b15}. We emphasize
that the two classes of operations discussed below depend on choices
of kernels at each level in particular NN-network models. Together
these families of operators allow for realizations of black box filter-entries
in associated generalized multi-resolutions systems, including operators
which consist of composition followed by multiplication. Specific
3D applications are presented in the subsequent sections, secs \ref{sec:kact}
and \ref{sec:fwidth}.

\textbf{Conventions. }Inside the paper we shall work with Hilbert
spaces of functions, e.g., reproducing kernel Hilbert spaces (RKHSs),
$L^{2}$ spaces, and Sobolev spaces. It will be assumed that these
are Hilbert spaces of real valued functions. Inner products will be
written $\left\langle \cdot,\cdot\right\rangle $, and we shall use
subscripts on $\left\langle \cdot,\cdot\right\rangle $ to indicate
the Hilbert space under consideration. Moreover, in our use of differentiation,
or differential operators, we shall mean weak derivatives, i.e., differentiation
in the sense of distributions, or making use of the natural duality
for the spaces under consideration. Our restriction here to the real
valued case is dictated by our present applications to feedforward
Neural Networks. However, many of our general results in \secref{nn}
below extend to complex RKHS theory. The latter in turn are important
in the study of geometry and potential theory of complex domains,
see e.g., \cite{MR1340173}. 

The power of kernel machines derives in part from the following facts.
First, kernel machines serve to map points in a low-dimensional data
sets (typically nonlinear) into higher dimensions. The dimensionality
of this linear \textquotedblleft hyperspace\textquotedblright{} may
be infinite but is designed for optimization and efficient encoding
of features. Hence the kernel method allows one to find coefficients
of separating hyperplanes for the problem at hand via RKHS-inner products,
one selected for each pair of high-dimensional features. While kernel
machines of various types have been used for decades, it was with
the invention of support vector machines (SVMs) that kernels have
now taken center stage (see e.g., \cite{zbMATH01669138,MR4329806,MR2849119,MR3108145,MR2274418,MR2246374}).
By now, SVMs are used in diverse applications, including in bioinformatics
(for finding similarities between different protein sequences), machine
vision, and handwriting recognition. Deep neural networks (to be discussed
in Sections \ref{sec:kact} and \ref{sec:fwidth} below) are made
of layers of artificial neurons: input layer, an output layer, and
multiple hidden layers in-between them. Deeper the networks have more
hidden layers. The parameters of the network represent the strengths
of the connections between layers. Training a network yield determination
of values of parameters. Once trained, the ANN represents a model
for turning an input (say, an image) into an output (a label or category).

The variety of uses of forward Neural Network algorithms, the recent
literature is substantial and diverse, especially with regards to
applications. See e.g., \cite{MR4512468,MR4505888,MR4395164,MR4185345,MR4072078,MR3457582}.

The following lemma is a basic result in the theory of RKHSs. For
details, see e.g., \cite{MR738131,zbMATH06526193,zbMATH06526192,MR4250453,Szabook},
and also \cite{MR4295177} and the papers cited therein. 
\begin{lem}
\label{lem:B1}Fix a p.d. kernel $X\times X\xrightarrow{\;K\;}\mathbb{R}\left(\text{or \ensuremath{\mathbb{C}}}\right)$,
let $\mathscr{H}_{K}$ denote the corresponding RKHS. Then a function
$F$ on $X$ is in $\mathscr{H}_{K}$ if and only if there exists
a constant $C_{F}<\infty$, such that the following estimate holds
for all $n\in\mathbb{N}$, all $\left(\xi_{i}\right)_{i=1}^{n}$,
$\xi_{i}\in\mathbb{R}\left(\text{or \ensuremath{\mathbb{C}}}\right)$,
and all $\left(x_{i}\right)_{i=1}^{n}$, $x_{i}\in X$: 
\begin{equation}
\left|\sum_{i=1}^{n}\xi_{i}F\left(x_{i}\right)\right|^{2}\leq C_{F}\sum_{i=1}^{n}\sum_{j=1}^{n}\overline{\xi}_{i}\xi_{j}K\left(x_{i},x_{j}\right).\label{eq:rhks1}
\end{equation}
\end{lem}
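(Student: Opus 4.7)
The statement is a standard characterization of RKHS membership, and the plan is to prove the two implications by a Cauchy--Schwarz argument one way and a Riesz representation argument the other way.

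For the forward direction, assuming $F\in\mathscr{H}_{K}$, I would use the reproducing property $F(x_{i})=\left\langle F,K(\cdot,x_{i})\right\rangle _{\mathscr{H}_{K}}$ to rewrite
\[
\sum_{i=1}^{n}\xi_{i}F(x_{i})=\Bigl\langle F,\sum_{i=1}^{n}\overline{\xi}_{i}K(\cdot,x_{i})\Bigr\rangle _{\mathscr{H}_{K}},
\]
apply Cauchy--Schwarz, and then evaluate
\[
\Bigl\|\sum_{i}\overline{\xi}_{i}K(\cdot,x_{i})\Bigr\|_{\mathscr{H}_{K}}^{2}=\sum_{i,j}\overline{\xi}_{i}\xi_{j}K(x_{i},x_{j})
\]
again using the reproducing property. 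This yields \eqref{eq:rhks1} with the explicit constant $C_{F}=\left\Vert F\right\Vert _{\mathscr{H}_{K}}^{2}$.

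For the converse, assume the bound \eqref{eq:rhks1} holds. I would introduce the dense subspace $\mathscr{H}_{0}:=\operatorname{span}\{K(\cdot,x):x\in X\}\subset\mathscr{H}_{K}$ and define a linear functional $L:\mathscr{H}_{0}\to\mathbb{R}\text{ (or }\mathbb{C}\text{)}$ by
\[
L\Bigl(\sum_{i}\overline{\xi}_{i}K(\cdot,x_{i})\Bigr):=\sum_{i}\xi_{i}F(x_{i}).
\]
The first key step is to verify that $L$ is well-defined: if the element on the left-hand side vanishes in $\mathscr{H}_{K}$, then its squared norm $\sum_{i,j}\overline{\xi}_{i}\xi_{j}K(x_{i},x_{j})$ is zero, and the hypothesis \eqref{eq:rhks1} forces $\sum_{i}\xi_{i}F(x_{i})=0$. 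The same computation shows $\left|L(g)\right|^{2}\le C_{F}\left\Vert g\right\Vert _{\mathscr{H}_{K}}^{2}$ for every $g\in\mathscr{H}_{0}$.

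Having bounded $L$ on a dense subspace, I would extend it by continuity to all of $\mathscr{H}_{K}$ and invoke the Riesz representation theorem to produce a unique $\widetilde{F}\in\mathscr{H}_{K}$ with $L(g)=\langle g,\widetilde{F}\rangle_{\mathscr{H}_{K}}$. Evaluating at the particular element $g=K(\cdot,x)$ (corresponding to $n=1$, $\xi_{1}=1$, $x_{1}=x$) and using the reproducing property gives $\widetilde{F}(x)=F(x)$ for every $x\in X$, so $F=\widetilde{F}\in\mathscr{H}_{K}$. The only real subtlety, and the step I expect to need the most care, is the well-definedness of $L$ on $\mathscr{H}_{0}$; everything else is a standard Cauchy--Schwarz plus Riesz packaging.
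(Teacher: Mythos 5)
Your proof is correct and complete; note that the paper itself states \lemref{B1} without proof, simply citing standard RKHS references, and your argument (Cauchy--Schwarz via the reproducing property for necessity, well-definedness of the functional on $\operatorname{span}\{K(\cdot,x)\}$ plus Riesz representation for sufficiency) is exactly the canonical proof found in those references. The only point to watch is the conjugation bookkeeping in the complex case when you invoke Riesz, so that the representing vector satisfies $\widetilde{F}(x)=F(x)$ rather than its conjugate, but this is routine and vanishes entirely in the real-valued setting the paper works in.
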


\begin{rem}
With the construction $K\mapsto\mathscr{H}_{K}$ (referring to a RKHS
of a fixed p.d. kernel $K$), we arrive at the following two conclusions:
\begin{enumerate}
\item For all $x\in X$, the function $K_{x}:=K\left(\cdot,x\right)$ is
in $\mathscr{H}_{K}$; and
\item For all $F\in\mathscr{H}_{K}$, and $x\in X$, we have 
\begin{equation}
F\left(x\right)=\left\langle F,K\left(\cdot,x\right)\right\rangle _{\mathscr{H}_{K}},\label{eq:b2}
\end{equation}
i.e., the values of functions $F$ in $\mathscr{H}_{K}$ are \emph{reproduced}
via the inner product $\left\langle \cdot,\cdot\right\rangle _{\mathscr{H}_{K}}$,
and the kernel functions. 
\end{enumerate}
In addition to (\ref{eq:b2}), we shall also consider \emph{relative
reproducing kernels}, and \emph{relative} RKHSs. As noted in \cite{MR3251728},
the \emph{relative }reproducing property takes the following form
\begin{equation}
F\left(y\right)-F\left(x\right)=\left\langle F,v_{x,y}\left(\cdot\right)\right\rangle _{\mathscr{H}_{rel}},\label{eq:b3}
\end{equation}
now valid for all pairs of points $x,y\in X$. So this entails double-indexed
kernel functions $v_{x,y}\in\mathscr{H}_{rel}$. 

A particular class of $\mathscr{H}_{rel}$ spaces are considered in
\thmref{hmu} below. There the setting is $X=\mathbb{R}$, and the
relative kernel functions $v_{a,b}$ take the form of activation functions
for classes of feedforward-NN-algorithms, see e.g., \figref{rmu}.

A systematic study of (\ref{eq:b3}) is undertaken in \cite{MR3251728}
where it is shown that the setting of relative reproducing is characterized
by \emph{conditionally negative definite functions}. 
\end{rem}

We now recall the RKHS for the standard 1-dimensional Brownian motion.
(See e.g., \cite{MR4295177,MR4302453,MR4274591,MR4472250}.)
\begin{lem}
When $K$ is the Brownian motion kernel on $\mathbb{R}_{\geq}\times\mathbb{R}_{\geq}$,
i.e., 
\begin{equation}
K\left(x,y\right)=x\wedge y=\frac{\left|x\right|+\left|y\right|-\left|x-y\right|}{2},\quad x,y\in\mathbb{R}_{\geq},\label{eq:B2}
\end{equation}
the corresponding RKHS $\mathscr{H}_{K}$ is the Hilbert space of
absolutely continuous functions $f$ on $\mathbb{R}$ such that the
derivative $f'=df/dx$ is in $L^{2}\left(\mathbb{R}\right)$, and
$f\left(0\right)=0$. Moreover, 
\begin{equation}
\left\Vert f\right\Vert _{\mathscr{H}_{K}}^{2}=\int_{\mathbb{R}_{\geq}}\left|f'\left(x\right)\right|^{2}dx,\quad\text{for all \ensuremath{f\in\mathscr{H}_{K}}.}\label{eq:1}
\end{equation}
\end{lem}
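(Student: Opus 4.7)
The plan is to identify $\mathscr{H}_K$ with the Dirichlet-type space
$\mathcal{H} := \{f\colon \mathbb{R}_{\geq}\to\mathbb{R} \mid f \text{ absolutely continuous},\ f(0)=0,\ f'\in L^{2}(\mathbb{R}_{\geq})\}$
equipped with $\langle f,g\rangle_{\mathcal{H}} = \int_{0}^{\infty} f'(x)g'(x)\,dx$, and to check that this space realizes the RKHS of $K$. The strategy is: first, verify the reproducing identity holds on $\mathcal{H}$; second, use \lemref{B1} to get $\mathcal{H}\subseteq\mathscr{H}_{K}$ with comparable norms; third, get the reverse inclusion via a density argument and pointwise extraction of a representative.

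For the first step, fix $y\in\mathbb{R}_{\geq}$ and note from \eqref{eq:B2} that $K_{y}(x)=x\wedge y$ equals $x$ on $[0,y]$ and $y$ on $[y,\infty)$, so $K_{y}$ is absolutely continuous with $K_{y}(0)=0$ and $K_{y}'=\mathbf{1}_{[0,y]}\in L^{2}$. Hence $K_{y}\in\mathcal{H}$, and for every $f\in\mathcal{H}$,
$\langle f,K_{y}\rangle_{\mathcal{H}} = \int_{0}^{\infty} f'(x)\mathbf{1}_{[0,y]}(x)\,dx = f(y)-f(0) = f(y)$.
In particular $\langle K_{x_{i}},K_{x_{j}}\rangle_{\mathcal{H}} = x_{i}\wedge x_{j} = K(x_{i},x_{j})$. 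Given $f\in\mathcal{H}$ and any finite data $\{(\xi_{i},x_{i})\}_{i=1}^{n}$, rewrite $\sum_{i}\xi_{i}f(x_{i}) = \langle f,\sum_{i}\xi_{i}K_{x_{i}}\rangle_{\mathcal{H}}$ and apply Cauchy--Schwarz to obtain $|\sum_{i}\xi_{i}f(x_{i})|^{2}\le\|f\|_{\mathcal{H}}^{2}\sum_{i,j}\xi_{i}\xi_{j}K(x_{i},x_{j})$. By \lemref{B1}, $f\in\mathscr{H}_{K}$ with $\|f\|_{\mathscr{H}_{K}}^{2}\le\|f\|_{\mathcal{H}}^{2}$.

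For the reverse inclusion, observe that on finite linear combinations $g=\sum_{i}c_{i}K_{y_{i}}$ a direct computation yields $\|g\|_{\mathscr{H}_{K}}^{2} = \sum_{i,j}c_{i}c_{j}(y_{i}\wedge y_{j}) = \int_{0}^{\infty}\bigl(\sum_{i}c_{i}\mathbf{1}_{[0,y_{i}]}\bigr)^{2}dx = \|g'\|_{L^{2}}^{2}$. Since such $g$ are dense in $\mathscr{H}_{K}$, differentiation extends to a linear isometry $D\colon\mathscr{H}_{K}\to L^{2}(\mathbb{R}_{\geq})$. Given $f\in\mathscr{H}_{K}$, pick $g_{n}\to f$ in $\mathscr{H}_{K}$-norm; then $g_{n}'\to h := Df$ in $L^{2}$, while the identity $g_{n}(x)=\int_{0}^{x}g_{n}'(t)\,dt$ passes to the limit by Cauchy--Schwarz on the right, and on the left it converges to $f(x)$ by continuity of point evaluation on $\mathscr{H}_{K}$. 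Thus $f(x)=\int_{0}^{x}h(t)\,dt$, so $f$ is absolutely continuous, $f(0)=0$, $f'=h\in L^{2}$, and $\|f\|_{\mathscr{H}_{K}}^{2} = \|h\|_{L^{2}}^{2} = \int|f'|^{2}\,dx$, which is \eqref{eq:1}.

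The main obstacle is the reverse inclusion: one must extract an honest, pointwise absolutely continuous representative from an abstract limit in the RKHS completion. The explicit formula $K_{y}'=\mathbf{1}_{[0,y]}$, equivalently the identity $\int \mathbf{1}_{[0,y_{i}]}\mathbf{1}_{[0,y_{j}]}\,dx = y_{i}\wedge y_{j}$, is what converts the RKHS inner product into the $L^{2}$ inner product of derivatives and drives the isometry. The boundary condition $f(0)=0$ is automatic because $K_{0}\equiv 0$, so $f(0)=\langle f,K_{0}\rangle_{\mathscr{H}_{K}}=0$ for every $f\in\mathscr{H}_{K}$; conversely it must be imposed on $\mathcal{H}$ to prevent constants from being null vectors of $\langle\cdot,\cdot\rangle_{\mathcal{H}}$.
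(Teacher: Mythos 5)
Your proof is correct and follows essentially the same route as the paper: the key identity $K_{y}'=\chi_{[0,y]}$, the reproducing computation $\int_{0}^{\infty}f'\chi_{[0,y]}=f(y)$, and the matching of inner products $\int\chi_{[0,y_{i}]}\chi_{[0,y_{j}]}\,dx=y_{i}\wedge y_{j}$ are exactly the paper's argument. The only difference is that you carefully carry out the reverse inclusion (extracting an absolutely continuous representative from a limit of kernel combinations via the isometry $D$ and continuity of point evaluation), a step the paper delegates to ``the general theory of RKHS.''
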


\begin{proof}
The key observation is that, if $x>0$, the function 
\begin{equation}
\mathbb{R}_{\geq}\ni y\longmapsto F_{x}\left(y\right):=K\left(y,x\right)=\begin{cases}
y & \text{if \ensuremath{y\leq x}}\\
x & \text{if \ensuremath{y>x}}
\end{cases}
\end{equation}
has weak derivative. Indeed, we have 
\begin{equation}
\frac{dF_{x}}{dy}=\chi_{\left[0,x\right]},
\end{equation}
i.e., the indicator function of the interval $\left[0,x\right]$.
Hence if $f$ is a function with $f'\in L^{2}\left(\mathbb{R}\right)$
and $f\left(0\right)=0$, then 
\begin{equation}
f\left(x\right)=f\left(x\right)-f\left(0\right)=\int_{0}^{x}f'\left(y\right)dy=\int_{\mathbb{R}}F_{x}'\left(y\right)f'\left(y\right)dy,\label{eq:a4}
\end{equation}
and the RHS in (\ref{eq:a4}) is the inner product from the Hilbert
space defined by the RHS in (\ref{eq:1}). 

The corresponding implication follows from the general theory of RKHS.
Recall that the RKHS of a kernel is a Hilbert space completion of
the functions 
\begin{equation}
y\longmapsto K\left(x,y\right)
\end{equation}
as $x$ varies over $\mathbb{R}$. Moreover, for $K\left(x,y\right)=x\wedge y$,
\[
\left\langle K\left(\cdot,x_{1}\right),K\left(\cdot,x_{2}\right)\right\rangle _{\mathscr{H}_{K}}=K\left(x_{1},x_{2}\right)=x_{1}\wedge x_{2},
\]
and we can compute as follows: 
\begin{align*}
\int_{\mathbb{R}}\left(\frac{d}{dy}K\left(\cdot,x_{1}\right)\right)\left(\frac{d}{dy}K\left(\cdot,x_{2}\right)\right)dy & =\int_{\mathbb{R}}\chi_{\left[0,x_{1}\right]}\left(y\right)\chi_{\left[0,x_{2}\right]}\left(y\right)dy\\
 & =\lambda\left(\left[0,x_{1}\right]\cap\left[0,x_{2}\right]\right)\\
 & =x_{1}\wedge x_{2}=K\left(x_{1},x_{2}\right)
\end{align*}
where $\lambda=dy$ denotes the Lebesgue measure. 
\end{proof}
\begin{rem}
Note that the functions $v_{a,b}$ (called dipoles) in $\mathscr{H}_{K}$
which satisfy 
\[
f\left(b\right)-f\left(a\right)=\left\langle f,v_{a,b}\right\rangle ,\quad\text{for all \ensuremath{f\in\mathscr{H}_{K}}}
\]
(see (\ref{eq:1}) and (\ref{eq:a4})) are as follows:
\[
v_{a,b}\left(x\right)=\begin{cases}
0 & \text{if }x<a\\
x-a & \text{if }a\leq x<b\\
b-a & \text{if }x>b,
\end{cases}
\]
as illustrated in \figref{dp}. Also compare with \thmref{hmu} and
\figref{rmu}, and the iterations in \secref{fwidth}.
\end{rem}

\begin{figure}[H]
\includegraphics[width=0.5\textwidth]{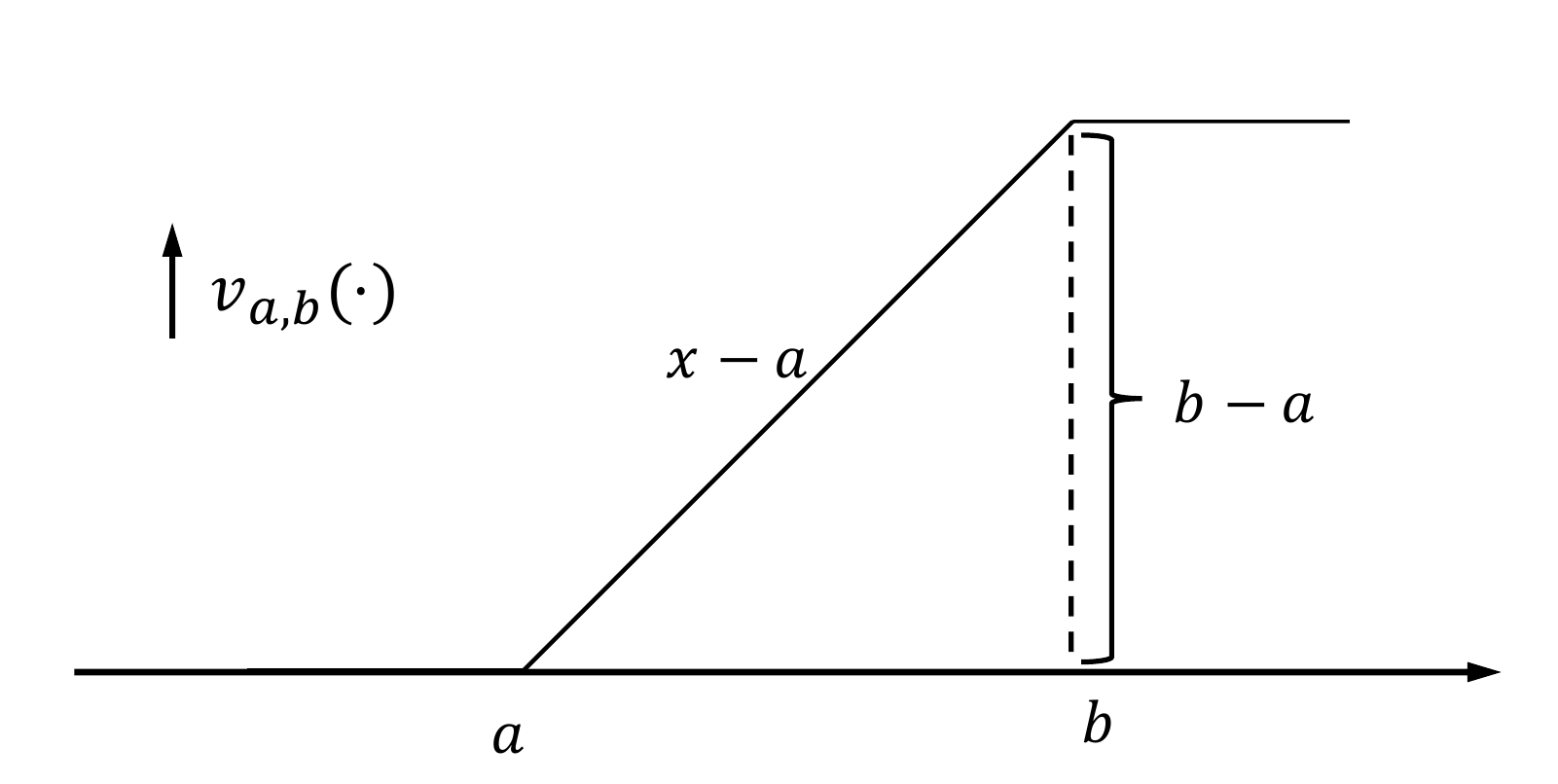}

\caption{\label{fig:dp}The generating dipole function $\left\{ v_{a,b}\right\} $
indexed by pairs $a,b$ such that $a<b$. Compare with \figref{rmu}
below.}
\end{figure}

\textbf{Induced metrics}

For a general p.d. kernel $K$ on $X\times X$, there is an induced
metric on $X$, 
\[
d_{K}:X\times X\rightarrow\mathbb{R}_{+}
\]
defined as (see e.g., \cite{MR4302453})
\begin{equation}
d_{K}\left(x,y\right)=\left\Vert K\left(\cdot,x\right)-K\left(\cdot,y\right)\right\Vert _{\mathscr{H}_{K}}^{2}.
\end{equation}
In particular, 
\[
d_{K}\left(x,y\right)=K\left(x,x\right)+K\left(y,y\right)-2\Re\left\{ K\left(x,y\right)\right\} .
\]
Note that $d_{K}^{1/2}$ is also a metric on $X\times X$. 
\begin{example}
For $K\left(x,y\right)=x\wedge y$ on $\mathbb{R}\times\mathbb{R}$
as in (\ref{eq:B2}), 
\begin{align*}
\left\Vert K\left(\cdot,s\right)-K\left(\cdot,t\right)\right\Vert _{\mathscr{H}_{K}}^{2} & =\left\Vert \left(\cdot\wedge s\right)'-\left(\cdot\wedge t\right)'\right\Vert _{L^{2}}^{2}\\
 & =\left\Vert \chi_{\left[0,s\right]}-\chi_{\left[0,t\right]}\right\Vert _{L^{2}}^{2}\\
 & =\left|s-t\right|.
\end{align*}
\end{example}

The results below deal with a general framework of pairs of sets $X$
and $Y$, each equipped with a positive definite kernel, $K$ resp.,
$L$, $K$ on $X$, and $L$ on $Y$. With view to realization of
feedforward Neural Network-functions, we will present an explicit
framework (see (\ref{eq:b9}) and (\ref{eq:ad1})) which allows us
to pass from (nonlinear) functions $f:X\rightarrow Y$ to linear operators
$T_{f}$ acting between the respective RKHSs $\mathscr{H}_{K}$ and
$\mathscr{H}_{L}$. This will be a representation in the sense that
composition of functions will map into products of the corresponding
linear operators. Some care must be exercised as the linear operators
$T_{f}$ will in general be unbounded. Nonetheless, we shall show
that the operators still fall in a class where spectral resolutions
are available, see \thmref{b9} and \corref{b10}. 
\begin{thm}
\label{thm:mc}Consider p.d. kernels $X\times X\xrightarrow{\;K\;}\mathbb{R}$
and $Y\times Y\xrightarrow{\;L\;}\mathbb{R}$. Let $f:X\rightarrow Y$
be Lipschitz continuous with respect to the induced metrics $d_{K},d_{L}$,
i.e., 
\[
d_{L}\left(f\left(x\right),f\left(y\right)\right)\leq c_{f}d_{K}\left(x,y\right),\quad x,y\in X,
\]
for some constant $c_{f}$. Define the operator $T_{f}:\mathscr{H}_{K}\rightarrow\mathscr{H}_{L}$
by 
\[
T_{f}\left(K_{x}\right)\left(y\right)=L\left(f\left(x\right),y\right)
\]
and extend it by linearity and density. 

Then, for any fixed $c\in Y$, the function 
\begin{equation}
F_{f}:X\rightarrow\mathbb{R},\quad F_{f}\left(x\right):=L\left(f\left(x\right),c\right)\label{eq:b9}
\end{equation}
is in the RKHS $\mathscr{H}_{K}$ if, and only if 
\begin{equation}
L_{c}\in dom(T_{f}^{*}),
\end{equation}
the domain of the adjoint operator. 

Moreover, 
\begin{align*}
F_{f,y}\left(x\right) & :=L\left(f\left(x\right),y\right)\in\mathscr{H}_{K},\;\forall y\in Y\\
 & \Updownarrow\\
T_{f}\: & \text{is closable}.
\end{align*}
(See also \thmref{mc2}.)
\end{thm}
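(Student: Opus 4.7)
The strategy is to reduce both assertions to a characterization of the adjoint domain $\mathrm{dom}(T_f^{*})$ obtained by testing against the kernel generators $L_{c}\in\mathscr{H}_{L}$, and then to translate ``$\eta\in\mathrm{dom}(T_{f}^{*})$'' into ``a certain function lies in $\mathscr{H}_{K}$'' by invoking \lemref{B1}.

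First I would record that $T_{f}$ is a well-defined linear operator on the dense subspace $\mathscr{D}:=\mathrm{span}\{K_{x}:x\in X\}\subseteq\mathscr{H}_{K}$; the Lipschitz hypothesis, rewritten as $\|L_{f(x)}-L_{f(x')}\|_{\mathscr{H}_{L}}^{2}\leq c_{f}\|K_{x}-K_{x'}\|_{\mathscr{H}_{K}}^{2}$, is precisely what makes the generator-level rule $K_{x}\mapsto L_{f(x)}$ extend consistently by linearity to $\mathscr{D}$. Next, by the standard definition of the adjoint, $\eta\in\mathrm{dom}(T_{f}^{*})$ iff the linear functional $\xi\mapsto\langle T_{f}\xi,\eta\rangle_{\mathscr{H}_{L}}$ is $\mathscr{H}_{K}$-continuous on $\mathscr{D}$. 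Specializing to $\eta=L_{c}$ and to a generic $\xi=\sum_{i}\xi_{i}K_{x_{i}}\in\mathscr{D}$, the reproducing property in $\mathscr{H}_{L}$ yields
\[
\langle T_{f}\xi,L_{c}\rangle_{\mathscr{H}_{L}}=\sum_{i}\xi_{i}\langle L_{f(x_{i})},L_{c}\rangle_{\mathscr{H}_{L}}=\sum_{i}\xi_{i}L(f(x_{i}),c)=\sum_{i}\xi_{i}F_{f}(x_{i}).
\]
Boundedness of this functional relative to $\|\xi\|_{\mathscr{H}_{K}}^{2}=\sum_{i,j}\xi_{i}\xi_{j}K(x_{i},x_{j})$ is exactly the estimate (\ref{eq:rhks1}) of \lemref{B1} applied to the function $F_{f}$. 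Hence $L_{c}\in\mathrm{dom}(T_{f}^{*})$ iff $F_{f}\in\mathscr{H}_{K}$, which settles the first equivalence.

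For the ``moreover'' assertion, apply the first equivalence with $c=y$ for every $y\in Y$ to obtain the pointwise equivalence ``$F_{f,y}\in\mathscr{H}_{K}\Leftrightarrow L_{y}\in\mathrm{dom}(T_{f}^{*})$''. Then ``$F_{f,y}\in\mathscr{H}_{K}$ for all $y\in Y$'' is the same as ``$\{L_{y}:y\in Y\}\subseteq\mathrm{dom}(T_{f}^{*})$''; since $\mathrm{span}\{L_{y}\}_{y\in Y}$ is norm-dense in $\mathscr{H}_{L}$, this forces $\mathrm{dom}(T_{f}^{*})$ to be dense, and by the classical von Neumann criterion this is equivalent to $T_{f}$ being closable.

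The hard part will be the reverse direction of the ``moreover'' claim: closability only supplies density of $\mathrm{dom}(T_{f}^{*})$, not that each particular $L_{y}$ sits in it. I plan to overcome this by first deriving the explicit formula $T_{f}^{*}\eta=\eta\circ f$ (valid on $\mathrm{dom}(T_{f}^{*})$, obtained by sandwiching against $K_{x}$ and using the reproducing property on both sides), and then, given $y\in Y$, choosing an approximating sequence $\eta_{n}\to L_{y}$ with $\eta_{n}\in\mathrm{dom}(T_{f}^{*})$ and transferring the pointwise convergence $\eta_{n}\circ f\to F_{f,y}$ into $\mathscr{H}_{K}$-membership of $F_{f,y}$ via the closedness of $\overline{T_{f}}$ and the uniform boundedness principle applied to the evaluation functionals on $\mathscr{H}_{K}$.
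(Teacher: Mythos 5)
Your treatment of the first equivalence is correct and follows essentially the same route as the paper: you test the adjoint condition against $L_{c}$, use the reproducing property in $\mathscr{H}_{L}$ to identify $\langle T_{f}\xi,L_{c}\rangle_{\mathscr{H}_{L}}$ with $\sum_{i}\xi_{i}F_{f}(x_{i})$, and recognize boundedness of this functional as the membership criterion of \lemref{B1}. This is in fact more explicit than the paper's own proof, which sets up the same quadratic estimate (the condition $Q_{N}\ll_{C}K_{N}$) and defers the adjoint bookkeeping to the discussion following the theorem, equations (\ref{eq:ad4})--(\ref{eq:ad8}). One small caution: the Lipschitz hypothesis, rewritten as $\|L_{f(x)}-L_{f(x')}\|_{\mathscr{H}_{L}}^{2}\le c_{f}\|K_{x}-K_{x'}\|_{\mathscr{H}_{K}}^{2}$, controls differences of two generators but does not by itself give well-definedness of $T_{f}$ on general linear combinations; one still needs $\sum_{i}c_{i}K_{x_{i}}=0\Rightarrow\sum_{i}c_{i}L_{f(x_{i})}=0$, a point the paper raises separately at the end of its discussion.

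The genuine gap is in the reverse direction of the ``moreover'' claim, exactly where you flagged it, and your proposed repair does not close it. Closability of $T_{f}$ gives density of $dom(T_{f}^{*})$, so you may choose $\eta_{n}\in dom(T_{f}^{*})$ with $\eta_{n}\to L_{y}$ in $\mathscr{H}_{L}$; but the closedness of the (closed) operator $T_{f}^{*}$ only yields $L_{y}\in dom(T_{f}^{*})$ if in addition $T_{f}^{*}\eta_{n}=\eta_{n}\circ f$ converges in $\mathscr{H}_{K}$, i.e., you need convergence in the graph norm, and nothing in the hypotheses bounds $\|\eta_{n}\circ f\|_{\mathscr{H}_{K}}$. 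The uniform boundedness principle does not rescue this: pointwise convergence $\eta_{n}\circ f\to F_{f,y}$ on $X$ gives at best pointwise bounds, whereas membership of $F_{f,y}$ in $\mathscr{H}_{K}$ requires the uniform quadratic estimate of \lemref{B1} over all finite subsets of $X$, which is strictly stronger. Trying to obtain that estimate directly from $|\langle T_{f}\varphi,L_{y}\rangle_{\mathscr{H}_{L}}|\le\|T_{f}\varphi\|_{\mathscr{H}_{L}}\|L_{y}\|_{\mathscr{H}_{L}}$ would require $T_{f}$ to be bounded, not merely closable. For what it is worth, the paper itself only establishes the implication ``$F_{f,y}\in\mathscr{H}_{K}$ for all $y$ implies $T_{f}$ closable'' (in the subsection on functions and operators) and asserts the converse without argument; so your instinct that this is the hard part is sound, but a proof of that direction is still missing from both your proposal and the paper.
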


\begin{proof}
Let the setting be as in the statement of the theorem, i.e., $X\xrightarrow{\;f\;}Y$
assumed continuous with respect to the two metrics, $d_{K}$ on $X$
and $d_{L}$ on $Y$; so in particular, for pairs of points $x_{1},x_{2}\in X$,
we have 
\begin{align}
d_{K}\left(x_{1},x_{2}\right) & =\left\Vert K\left(\cdot,x_{1}\right)-K\left(\cdot,x_{2}\right)\right\Vert _{\mathscr{H}_{K}}^{2}\label{eq:d1}\\
 & =K\left(x_{1},x_{1}\right)+K\left(x_{2},x_{2}\right)-2K\left(x_{1},x_{2}\right).
\end{align}
We further fix a point $c\in Y$, and set $F=F_{f,c}$, specified
as follows:
\[
F\left(x\right)=L\left(f\left(x\right),c\right),\;\text{for all \ensuremath{x\in X},}
\]
so $F:X\rightarrow\mathbb{R}_{+}\cup\left\{ 0\right\} $. 

Now, for every $N$, and every subset $S_{N}=\left(x_{1},x_{2},\dots,x_{N}\right)\subset X$,
consider the following matrix operations (in $N$ dimensions): 
\begin{equation}
\underset{\text{column vectors}}{\underbrace{F\big|_{N}:=\begin{bmatrix}F\left(x_{1}\right)\\
\vdots\\
F\left(x_{N}\right)
\end{bmatrix}}},\;\text{and}\quad\underset{\text{matrix of a rank-1 operator}}{\underbrace{Q_{N}:=\left|F\big|_{N}\left\rangle \right\langle F\big|_{N}\right|}},
\end{equation}
i.e., the rank-1 operator on $\mathbb{R}^{N}$ written in Dirac's
notation, defined as 
\begin{equation}
Q_{N}\left(\xi\right)=\left\langle F_{N},\xi\right\rangle F_{N}
\end{equation}
for all $\xi\in\mathbb{R}^{N}$. Set 
\begin{equation}
K_{N}:=\left(K\left(x_{i},x_{j}\right)\right)_{i,j=1}^{N}=\begin{bmatrix}K\left(x_{1},x_{1}\right) & \cdots & K\left(x_{1},x_{N}\right)\\
\vdots & \ddots & \vdots\\
K\left(x_{N},x_{1}\right) & \cdots & K\left(x_{N},x_{N}\right)
\end{bmatrix},
\end{equation}
a sample matrix. 

For the convex cone of all positive definite $N\times N$ matrices,
we introduce the following familiar ordering, $K\ll_{C}K'$ iff (Def.)
$\exists C<\infty$ such that 
\begin{equation}
\xi^{T}K_{N}\xi\leq C\xi^{T}K'_{N}\xi\;\text{for all }\xi\in\mathbb{R}^{N}.
\end{equation}

Now an application of \lemref{B1} above shows that the assertion
in the theorem is equivalent to the existence of a finite constant
$C$ (independent of $S_{N}=\left(x_{i}\right)_{i=1}^{N}$) satisfying
$Q_{N}\ll_{C}K_{N}$, i.e., the estimate 
\begin{equation}
\left|\sum_{i}\xi L\left(f\left(x_{i}\right),c\right)\right|^{2}\leq C\sum_{i}\sum_{j}\xi_{i}K\left(x_{i},x_{j}\right)\xi_{j}
\end{equation}
holds for all $N$, all $S_{N}=\left\{ x_{i}\right\} _{i=1}^{N}$,
and all $\xi\in\mathbb{R}^{N}$. We get this from the assumption on
$f$ in the theorem. See details below. 
\end{proof}
\textbf{Summary of \thmref{mc}:} Start with $K$ p.d. on $X\times X$,
$L$ p.d. on $Y\times Y$, and $f:X\rightarrow Y$. We introduce the
metrics $d_{K}$ on $X$, and $d_{L}$ on $Y$, and we consider $f$
continuous, or Lipchitz. To get the desired conclusion 
\[
\left(x\longmapsto L\left(f\left(x\right),y\right)\right)\in\mathscr{H}_{K},
\]
we must introduce an operator $T_{f}:\mathscr{H}_{K}\rightarrow\mathscr{H}_{L}$.
The right choice is 
\[
L_{y}\in dom(T_{f}^{*}).
\]
See details below: 

Fixing two kernels $K$ and $L$, assumed p.d. on $X\times X$, and
on $Y\times Y$. Pass to the corresponding RKHS $\mathscr{H}_{K}$
and $\mathscr{H}_{L}$. 
\begin{problem*}
Find conditions on functions $X\xrightarrow{\;f\;}Y$ with the property
that, for $\forall y\in Y$, then the induced function 
\begin{equation}
\underset{F_{f,y}\left(\cdot\right)\text{ as a function on \ensuremath{X}}}{\underbrace{\left(X\ni x\longmapsto L\left(f\left(x\right),y\right)\right)}}\in\mathscr{H}_{K}.\label{eq:ad1}
\end{equation}
\end{problem*}
The argument stressed below is via dual operators (bounded) 
\[
\xymatrix{\mathscr{H}_{K}\ar@/{}^{1.3pc}/[rr]^{T_{f}} &  & \mathscr{H}_{L}\ar@/{}^{1.3pc}/[ll]^{T_{f}^{*}}}
;
\]
but the unbounded case is also interesting. 

Some remarks on the definition of the operator $T_{f}:\mathscr{H}_{K}\rightarrow\mathscr{H}_{L}$
in the case when no additional assumptions are placed on $X\xrightarrow{\;f\;}Y$. 

We define 
\[
T_{f}\left(K_{x}\right)\left(y\right)=L\left(f\left(x\right),y\right);
\]
and so we extend $T_{f}$ to linear combinations: 
\begin{equation}
\mathscr{D}_{K}:=\underset{\text{function on \ensuremath{X}}}{\big\{\underbrace{\sum_{i}c_{i}K_{x_{i}}}\big\}}\xrightarrow{\quad T_{f}\quad}\underset{\text{function on \ensuremath{Y}}}{\underbrace{\sum_{i}c_{i}L\left(f\left(x_{i}\right),\cdot\right)}}.\label{eq:ad2}
\end{equation}
But to make sense of (\ref{eq:ad2}) so it is well defined, we must
be careful with equivalence classes. 

If $f:X\rightarrow Y$ is a general function, the (generalized) operator
(\ref{eq:ad2}) may be non-closable. However, we can still define
the adjoint $T_{f}^{*}$ , but its domain might be ``small''.

Set 
\begin{equation}
\mathscr{D}_{K}:=span\left\{ K_{x}\right\} _{x\in X},\label{eq:ad3}
\end{equation}
then (Definition) a vector $\psi\in\mathscr{H}_{L}$ is in $dom(T_{f}^{*})$
iff $\exists C_{\psi}<\infty$ s.t. 
\begin{equation}
\left|\left\langle T_{f}\varphi,\psi\right\rangle _{\mathscr{H}_{L}}\right|\leq C_{\psi}\left\Vert \varphi\right\Vert _{\mathscr{H}_{K}},\quad\forall\varphi\in\mathscr{D}_{K}.\label{eq:ad4}
\end{equation}
We then set $T_{f}^{*}\psi=$ the solution to 
\begin{equation}
\left\langle T_{f}\varphi,\psi\right\rangle _{\mathscr{H}_{L}}=\left\langle \varphi,T_{f}^{*}\psi\right\rangle _{\mathscr{H}_{K}},\quad\forall\varphi\in\mathscr{D}_{K}.\label{eq:ad5}
\end{equation}

Let $K,L,f$ be as specified, and assume for some $y\in Y$, that
we have $L_{y}\in dom(T_{f}^{*})$, then (\ref{eq:ad5}) for $L_{y}\in\mathscr{H}_{L}\xrightarrow{\;T_{f}^{*}\;}\mathscr{H}_{K}$,
$\varphi=K_{x}$, $\psi=L_{y}$ yields 
\[
x\longmapsto T_{f}^{*}\left(L_{y}\right)\left(x\right)=L\left(f\left(x\right),y\right)\in\mathscr{H}_{K}.
\]
So the conclusion in \thmref{mc} that 
\begin{equation}
\left(x\longmapsto L\left(f\left(x\right),y\right)\right)\in\mathscr{H}_{K}\label{eq:ad6}
\end{equation}
holds iff $L_{y}\in dom(T_{f}^{*})$. 

In this case there are no difficulties with (\ref{eq:ad2}) and we
get a dual pair $T_{f}$ and $T_{f}^{*}$, 
\begin{equation}
\left\langle T_{f}\varphi,\psi\right\rangle _{\mathscr{H}_{L}}=\left\langle \varphi,T_{f}^{*}\psi\right\rangle _{\mathscr{H}_{K}}\label{eq:ad7}
\end{equation}
for $\forall\varphi\in\mathscr{D}_{K}$, and $\psi\in\mathscr{D}_{L}$,
$\xymatrix{\mathscr{H}_{K}\ar@/{}^{0.7pc}/[r]^{T_{f}} & \mathscr{H}_{L}\ar@/{}^{0.7pc}/[l]^{T_{f}^{*}}}
$. 

Setting $\varphi=K_{x}$, and $\psi=L_{y}$, (\ref{eq:ad7}) implies
\begin{equation}
T_{f}^{*}\left(L_{y}\right)\left(x\right)=L\left(f\left(x\right),y\right)=\left(T_{f}\left(K_{x}\right)\right)\left(y\right).\label{eq:ad8}
\end{equation}
But the previous condition $L_{y}\in dom(T_{f}^{*})$ (compare (\ref{eq:ad7}))
amounts to the assertion that $T_{f}^{*}\left(L_{y}\right)\in\mathscr{H}_{K}$,
and by (\ref{eq:ad8}), this is then the conclusion for \thmref{mc}. 
\begin{cor}[composition operators]
\label{cor:b7} Let $X$, $Y$, $K$ and $L$ be as specified above;
in particular, $K$ is a fixed p.d. kernel on $X\times X$, and the
RKHS $\mathscr{H}_{K}$ is a Hilbert space of scalar valued functions
on $X$. Similarly, $\mathscr{H}_{L}$ is a Hilbert space of scalar
valued functions on $Y$. Both $\mathscr{H}_{K}$ and $\mathscr{H}_{L}$
satisfy the defining axioms for RKHSs; see \lemref{B1} above. As
noted, every function $f$, $X\xrightarrow{\;f\;}Y$, induces a linear
operator 
\begin{equation}
\mathscr{H}_{K}\xrightarrow{\;T_{f}\;}\mathscr{H}_{L},\label{eq:st1}
\end{equation}
with dense domain $\mathscr{D}_{K}$; see the statement of \thmref{mc}.
For the adjoint operator $T_{f}^{*}$, 
\begin{equation}
\mathscr{H}_{L}\xrightarrow{\;T_{f}^{*}\;}\mathscr{H}_{K},\label{eq:st2}
\end{equation}
we have the following: For a function $\psi$ in $\mathscr{H}_{L}$,
the two characterizations (\ref{eq:st3a}) and (\ref{eq:st3b}) hold:
\begin{gather}
\psi\in dom(T_{f}^{*})\label{eq:st3a}\\
\Updownarrow\nonumber \\
\psi\circ f\in\mathscr{H}_{K}.\label{eq:st3b}
\end{gather}
In the affirmative, 
\begin{equation}
T_{f}^{*}\left(\psi\right)=\psi\circ f:X\rightarrow\mathbb{R},\label{eq:st4}
\end{equation}
i.e., $T_{f}^{*}$ is the composition operator. 
\end{cor}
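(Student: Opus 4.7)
The strategy is to unwind the adjoint identity (\ref{eq:ad5}) on the dense domain $\mathscr{D}_{K}$ of kernel sections, and to observe that both sides collapse to reproducing-kernel evaluations. The key computation is a two-line use of the reproducing property in $\mathscr{H}_L$: for any $\psi\in\mathscr{H}_L$ and any $x\in X$,
\[
\langle T_{f}K_{x},\psi\rangle_{\mathscr{H}_{L}}=\langle L(f(x),\cdot),\psi\rangle_{\mathscr{H}_{L}}=\psi(f(x)).
\]
Extending by linearity, for $\varphi=\sum_{i}c_{i}K_{x_{i}}\in\mathscr{D}_{K}$,
\[
\langle T_{f}\varphi,\psi\rangle_{\mathscr{H}_{L}}=\sum_{i}c_{i}\psi(f(x_{i})).
\]
This identity, established once up front, drives both implications.

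For the direction $\psi\in dom(T_{f}^{*})\Rightarrow \psi\circ f\in\mathscr{H}_{K}$, I would simply set $\varphi=K_{x}$ in the defining relation (\ref{eq:ad5}); the left side is $\psi(f(x))$ by the display above, and the right side is $(T_{f}^{*}\psi)(x)$ by the reproducing property in $\mathscr{H}_{K}$. Hence $T_{f}^{*}\psi=\psi\circ f$ pointwise on $X$, which in particular gives $\psi\circ f\in\mathscr{H}_{K}$ and establishes the explicit formula (\ref{eq:st4}).

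For the converse, assume $\psi\circ f\in\mathscr{H}_{K}$. Then each term in $\sum_{i}c_{i}\psi(f(x_{i}))$ is a reproducing-kernel evaluation in $\mathscr{H}_K$, so
\[
\langle T_{f}\varphi,\psi\rangle_{\mathscr{H}_{L}}=\sum_{i}c_{i}\langle \psi\circ f,K_{x_{i}}\rangle_{\mathscr{H}_{K}}=\langle \psi\circ f,\varphi\rangle_{\mathscr{H}_{K}}.
\]
Cauchy--Schwarz in $\mathscr{H}_{K}$ then delivers the bound $|\langle T_{f}\varphi,\psi\rangle_{\mathscr{H}_{L}}|\leq \|\psi\circ f\|_{\mathscr{H}_{K}}\,\|\varphi\|_{\mathscr{H}_{K}}$ uniformly for $\varphi\in\mathscr{D}_{K}$, which is exactly criterion (\ref{eq:ad4}) for $\psi\in dom(T_{f}^{*})$, with $C_{\psi}=\|\psi\circ f\|_{\mathscr{H}_{K}}$.

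There is essentially no hard step: the entire proof is powered by applying the reproducing property once in $\mathscr{H}_{L}$ and once in $\mathscr{H}_{K}$. The only care needed is to keep the adjoint relation restricted to the dense domain $\mathscr{D}_{K}$ (since $T_{f}$ need not be closable in general, as the excerpt warns), but this is automatic because (\ref{eq:ad4}) is stated with quantifier over $\varphi\in\mathscr{D}_{K}$. The forward direction of \thmref{mc}, already sketched in the discussion preceding the corollary with $\psi=L_{y}$, is the special case one recovers by specializing $\psi$ to a kernel section in $\mathscr{H}_{L}$.
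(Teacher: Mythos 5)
Your proposal is correct and follows essentially the same route as the paper's proof: both hinge on the single computation $\langle T_{f}K_{x},\psi\rangle_{\mathscr{H}_{L}}=\langle L(f(x),\cdot),\psi\rangle_{\mathscr{H}_{L}}=\psi(f(x))$ via the reproducing property in $\mathscr{H}_{L}$, then run the adjoint criterion (\ref{eq:ad4}) in both directions. The only (cosmetic) difference is that in the forward direction you conclude $\psi\circ f=T_{f}^{*}\psi\in\mathscr{H}_{K}$ directly from the pointwise identity, and in the converse you apply Cauchy--Schwarz to $\langle\psi\circ f,\varphi\rangle_{\mathscr{H}_{K}}$, whereas the paper routes both steps through the quadratic-form characterization of \lemref{B1}; the content is the same.
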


\begin{proof}
(\ref{eq:st3a})$\Rightarrow$(\ref{eq:st3b}). Assume (\ref{eq:st3a}),
we then apply (\ref{eq:ad4}), and get $C_{\psi}<\infty$ with: 
\begin{equation}
\left|\sum_{i}c_{i}\left(T_{f}^{*}\left(\psi\right)\right)\left(x_{i}\right)\right|^{2}\leq C_{\psi}\sum_{i}\sum_{j}c_{i}c_{j}K\left(x_{i},x_{j}\right).\label{eq:st5}
\end{equation}
But 
\begin{align}
T_{f}^{*}\left(\psi\right)\left(x_{i}\right) & =\left\langle K_{x_{i}},T_{f}^{*}\left(\psi\right)\right\rangle _{\mathscr{H}_{K}}\label{eq:st6}\\
 & =\left\langle T_{f}\left(K_{x_{i}}\right),\psi\right\rangle _{\mathscr{H}_{L}}\nonumber \\
 & =\left\langle L\left(f\left(x_{i}\right),\cdot\right),\psi\right\rangle _{\mathscr{H}_{L}}\nonumber \\
 & =\psi\left(f\left(x_{i}\right)\right),\nonumber 
\end{align}
where we used the RKHS property for $\mathscr{H}_{L}$ in the last
step. Substitution into (\ref{eq:st5}) yields
\begin{equation}
\left|\sum_{i}c_{i}\psi\left(f\left(x_{i}\right)\right)\right|^{2}\leq C_{\psi}\sum_{i}\sum_{j}c_{i}c_{j}K\left(x_{i},x_{j}\right),\label{eq:st7}
\end{equation}
and, so by \lemref{B1} applied to $F=\psi\circ f$, conclusion in
(\ref{eq:st3b}) follows.

(\ref{eq:st3b})$\Rightarrow$(\ref{eq:st3a}). Assume (\ref{eq:st3b}),
we then reverse the above reasoning to get 
\begin{equation}
\Big|\Big\langle T_{f}\underset{\in\mathscr{D}_{K}}{\underbrace{\left(\sum_{i}c_{i}K_{x_{i}}\right)}},\psi\Big\rangle_{\mathscr{H}_{L}}\Big|\leq\sqrt{C_{\psi}}\left\Vert \sum_{i}c_{i}K_{x_{i}}\right\Vert _{\mathscr{H}_{K}}\label{eq:st8}
\end{equation}
which states that $\psi\in dom(T_{f}^{*})$, which is condition (\ref{eq:st3a}).
Now combine this with (\ref{eq:st6}), and we conclude that (\ref{eq:st4})
is satisfied for $\psi$, i.e., that $T_{f}^{*}\psi=\psi\circ f$
holds. 
\end{proof}

\subsection{Basis approach}

Let $X,Y,K,L,f$ be as usual, and define $T_{f}:\mathscr{H}_{K}\rightarrow\mathscr{H}_{L}$.
Since $K$ is p.d. on $X\times X$, the RKHS $\mathscr{H}_{K}$ allows
an ONB $\left\{ h_{i}\right\} _{i\in\mathbb{N}}$, $h_{i}\in\mathscr{H}_{K}$;
by general theory, we get the pointwise formula:
\begin{equation}
K\left(x_{1},x_{2}\right)=\sum_{i\in\mathbb{N}}h_{i}\left(x_{1}\right)h_{i}\left(x_{2}\right).\label{eq:bd1}
\end{equation}
Then our condition in \thmref{mc} is equivalent to the following:
\begin{gather*}
\left(X\ni x\longmapsto L\left(f\left(x\right),y\right)\right)\in\mathscr{H}_{K}\tag{{a}}\\
\Updownarrow\\
\sum_{i\in\mathbb{N}}\left|\left(T_{f}\left(h_{i}\right)\right)\left(y\right)\right|^{2}<\infty.\tag{{b}}
\end{gather*}

\begin{proof}
$\left(a\right)\Rightarrow\left(b\right)$ is detailed below; but
the converse implication will follow by the same argument. So by $\left(a\right)$,
$L_{y}\in dom(T_{f}^{*})$ and therefore:
\begin{equation}
T_{f}^{*}\left(L_{y}\right)\left(\cdot\right)\in\mathscr{H}_{K}.\label{eq:bd2}
\end{equation}
Since $\left\{ h_{i}\right\} $ is an ONB in $\mathscr{H}_{K}$, 
\begin{equation}
\sum_{i}\Big|\big\langle h_{i},\underset{\in\mathscr{H}_{K}}{\underbrace{T_{f}^{*}\left(L_{y}\right)}}\big\rangle\Big|^{2}=\left\Vert T_{f}^{*}\left(L_{y}\right)\right\Vert _{\mathscr{H}_{K}}^{2}<\infty.\label{eq:bd3}
\end{equation}
But from the $\text{LHS}_{\left(\ref{eq:bd3}\right)}$: 
\[
\left\langle h_{i},T_{f}^{*}\left(L_{y}\right)\right\rangle _{\mathscr{H}_{K}}=\left\langle T_{f}\left(h_{i}\right),L_{y}\right\rangle _{\mathscr{H}_{L}}=\left(T_{f}\left(h_{i}\right)\right)\left(y\right),
\]
and $\left(b\right)$ follows. 
\end{proof}
\textbf{Key Question: When is $F_{f,y}\left(\cdot\right)\in\mathscr{H}_{K}$?}
The cleanest answer to the question of what functions $X\xrightarrow{\;f\;}Y$
have the property that 

\begin{equation}
F_{f,y}\left(x\right)=L\left(f\left(x\right),y\right)\text{ is in \ensuremath{\mathscr{H}_{K}} }\label{eq:cd1}
\end{equation}
is this: 
\begin{thm}
\label{thm:mc2}Let $K,L$ and $f$ be given, then 
\begin{equation}
F_{f,y}\:\text{in }\left(\ref{eq:cd1}\right)\;\text{is in \ensuremath{\mathscr{H}_{K}\Longleftrightarrow L_{y}\in}dom(\ensuremath{T_{f}^{*}}),}\label{eq:cd2}
\end{equation}
where the operator $T_{f}:\mathscr{H}_{K}\rightarrow\mathscr{H}_{L}$
is given by 
\[
T_{f}\left(K_{x}\right):=L\left(f\left(x\right),\cdot\right).
\]
Moreover, (\ref{eq:cd2}) holds for all $y\in Y\Longleftrightarrow T_{f}$
is closable. 
\end{thm}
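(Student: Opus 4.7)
My plan is to handle the biconditional (\ref{eq:cd2}) by the same translation-into-estimates strategy used in \thmref{mc}, and then to deduce the closability statement from the density of $\mathscr{D}_L = \mathrm{span}\{L_y : y \in Y\}$ inside $\mathscr{H}_L$. By \lemref{B1}, $F_{f,y} \in \mathscr{H}_K$ is equivalent to the existence of a constant $C < \infty$ such that
\[
\Bigl|\sum_{i} c_i F_{f,y}(x_i)\Bigr|^2 \leq C \sum_{i,j} c_i c_j K(x_i, x_j)
\]
for every finite choice $\{c_i\} \subset \mathbb{R}$, $\{x_i\} \subset X$. On the other side, the defining inequality (\ref{eq:ad4}) says that $L_y \in \mathrm{dom}(T_f^{*})$ exactly when $|\langle T_f \varphi, L_y\rangle_{\mathscr{H}_L}| \leq C^{1/2}\|\varphi\|_{\mathscr{H}_K}$ for all $\varphi \in \mathscr{D}_K$. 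Taking $\varphi = \sum_i c_i K_{x_i}$ and invoking $T_f(K_{x_i}) = L(f(x_i),\cdot)$ together with the reproducing property in $\mathscr{H}_L$, the inner product simplifies to $\sum_i c_i L(f(x_i),y) = \sum_i c_i F_{f,y}(x_i)$. Thus the two estimates coincide, which gives (\ref{eq:cd2}).

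When both sides hold I would then identify $T_f^{*}(L_y) = F_{f,y}$ pointwise by the reproducing property in $\mathscr{H}_K$, namely
\[
(T_f^{*}L_y)(x) = \langle K_x, T_f^{*}L_y\rangle_{\mathscr{H}_K} = \langle T_f K_x, L_y\rangle_{\mathscr{H}_L} = L(f(x),y) = F_{f,y}(x),
\]
which also recovers the composition-operator formula of \corref{b7}.

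For the second equivalence, the plan exploits that $\mathrm{dom}(T_f^{*})$ is a linear subspace, so the condition ``$L_y \in \mathrm{dom}(T_f^{*})$ for every $y \in Y$'' is the same as the inclusion $\mathscr{D}_L \subseteq \mathrm{dom}(T_f^{*})$. Since $\mathscr{D}_L$ is dense in $\mathscr{H}_L$, this inclusion is enough to make $T_f^{*}$ densely defined, and the standard fact from unbounded operator theory then identifies this with $T_f$ being closable. The forward direction of the second ``$\Leftrightarrow$'' is thus a clean consequence of the first biconditional applied at every $y$.

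The main obstacle I expect is the reverse direction of the closability statement: from $T_f$ closable one only extracts that $\mathrm{dom}(T_f^{*})$ is dense, not that it contains each specific kernel section $L_y$. My intended workaround is to pass to the closure $\overline{T_f} = T_f^{**}$, whose adjoint equals $T_f^{*}$, and to argue by contradiction using the composition-operator identification of \corref{b7}: if some $L_{y_0} \notin \mathrm{dom}(T_f^{*})$, then $x \mapsto L(f(x),y_0)$ fails to lie in $\mathscr{H}_K$, and I would try to contradict this by approximating $L_{y_0}$ in $\mathscr{H}_L$ by elements of $\mathrm{dom}(T_f^{*})$ and using the closed graph of $\overline{T_f}$ to transport the bound on the inner products $\langle T_f \varphi, \cdot\rangle$ through the limit. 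This is the step I regard as the delicate one, and it is where the RKHS structure (as opposed to a generic closable operator) must really be used.
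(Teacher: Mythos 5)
Your handling of the first equivalence \eqref{eq:cd2} is correct and is essentially the argument the paper itself uses (the paper has no separate proof environment for \thmref{mc2}; the content is established in the discussion of $dom(T_{f}^{*})$ around \eqref{eq:ad4}--\eqref{eq:ad8} and in the proof of \corref{b7}): one translates $F_{f,y}\in\mathscr{H}_{K}$ into the quadratic estimate of \lemref{B1}, translates $L_{y}\in dom(T_{f}^{*})$ into the boundedness of $\varphi\mapsto\langle T_{f}\varphi,L_{y}\rangle_{\mathscr{H}_{L}}$ on $\mathscr{D}_{K}$, and observes via $T_{f}(K_{x})=L(f(x),\cdot)$ and the reproducing property that the two estimates are literally the same. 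The identification $T_{f}^{*}(L_{y})=F_{f,y}$ and the forward half of the closability claim (all $L_{y}\in dom(T_{f}^{*})$ $\Rightarrow$ $dom(T_{f}^{*})\supseteq\mathscr{D}_{L}$ dense $\Rightarrow$ $T_{f}$ closable) also match the paper's ``Functions and Operators'' subsection exactly.

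The genuine gap is the reverse half of the closability statement, and your proposed workaround does not close it. Closability of $T_{f}$ only yields that $dom(T_{f}^{*})$ is \emph{dense} in $\mathscr{H}_{L}$; it gives no control over whether a \emph{specific} vector $L_{y_{0}}$ lies in that domain. Approximating $L_{y_{0}}$ in $\mathscr{H}_{L}$-norm by $\psi_{n}\in dom(T_{f}^{*})$ gives $\langle T_{f}\varphi,\psi_{n}\rangle\to\langle T_{f}\varphi,L_{y_{0}}\rangle$ for each fixed $\varphi$, but the bounds $|\langle T_{f}\varphi,\psi_{n}\rangle|\leq\Vert T_{f}^{*}\psi_{n}\Vert\,\Vert\varphi\Vert$ are useless in the limit unless $\sup_{n}\Vert T_{f}^{*}\psi_{n}\Vert<\infty$, which nothing supplies: the domain of an adjoint is closed only in the graph norm of $T_{f}^{*}$, not in the norm of $\mathscr{H}_{L}$, and the closed-graph property of $\overline{T_{f}}=T_{f}^{**}$ governs sequences in $dom(T_{f})$, not in $dom(T_{f}^{*})$. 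So the contradiction you aim for never materializes. You are right that any repair must exploit the special structure $T_{f}K_{x}=L_{f(x)}$ rather than generic operator theory; be aware, though, that the paper itself only argues the forward implication (``the condition in \thmref{mc} $\Rightarrow$ $T_{f}$ is closable'') and offers no proof of the converse, so this direction should be treated as unestablished rather than as something you failed to reproduce. A construction with $X=Y=\mathbb{N}$, $K=\delta$, $f=\mathrm{id}$, and $L(m,n)=\langle g_{m},g_{n}\rangle$ for a suitably chosen frame-like family $(g_{n})$ suggests the converse can actually fail without further hypotheses on $K,L,f$, so you should not expect a proof to exist at this level of generality.
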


\subsection{Dual pairs of operators}

Consider a symmetric pair of operators with dense domains:
\[
\xymatrix{\mathscr{H}_{K}\ar@/{}^{1.3pc}/[rr]^{T_{f}} &  & \mathscr{H}_{L}\ar@/{}^{1.3pc}/[ll]^{T_{f}^{*}}}
\]
($T=T_{f}$, since it will depend on $f$) where 
\begin{equation}
span\left\{ K_{x}\right\} _{x\in X}\:\text{is dense in \ensuremath{\mathscr{H}_{K}}}\label{eq:dd2}
\end{equation}
and 
\begin{equation}
span\left\{ L_{y}\right\} _{y\in Y}\:\text{is dense in \ensuremath{\mathscr{H}_{L}}}\label{eq:dd3}
\end{equation}
such that 
\begin{align}
K_{x} & \in dom(T_{f}),\;\text{and}\label{eq:dd4}\\
L_{y} & \in dom(T_{f}^{*})\label{eq:dd5}
\end{align}
where ``$dom$'' denotes the respective operator domains. 
\begin{note*}
We note that 
\[
T_{f}\left(K_{x}\right)\left(\cdot\right)=L\left(f\left(x\right),\cdot\right)\in\mathscr{H}_{L}
\]
is always well defined, with dense domain, but the secret is $T_{f}^{*}.$ 

Also note that (\ref{eq:dd5}) is the condition in \thmref{mc}.
\end{note*}
Let $f:X\rightarrow Y$ be as before, and the two p.d. kernels $K$
and $L$ are fixed. We then introduce the corresponding (densely defined)
operator $T_{f}:\mathscr{H}_{K}\rightarrow\mathscr{H}_{L}$ by setting
\begin{equation}
T_{f}\left(K_{x}\right)=L\left(f\left(x\right),\cdot\right)\in\mathscr{H}_{L}.\label{eq:cc1}
\end{equation}

\textbf{Notation and convention. }$K_{x}\left(\cdot\right)$ is the
kernel function in $\mathscr{H}_{K}$ as usual: 
\begin{align}
K_{x}\left(t\right) & =K\left(x,t\right),\;\forall t\in X\:\text{and similarly,}\label{eq:cc2}\\
L_{y}\left(u\right) & =L\left(y,u\right),\;\forall u\in Y.\label{eq:cc3}
\end{align}

\begin{lem}
\label{lem:b9}When $L_{y}\in dom(T_{f}^{*})$ then 
\begin{equation}
\left(T_{f}^{*}\left(L_{y}\right)\right)\left(x\right)=L\left(f\left(x\right),y\right)\;\text{on \ensuremath{X},}\label{eq:cc4}
\end{equation}
equivalently, 
\begin{equation}
T_{f}^{*}\left(L_{y}\right)\left(\cdot\right)=L\left(f\left(\cdot\right),y\right)\;\text{on \ensuremath{X}.}\label{eq:cc4'}
\end{equation}
\end{lem}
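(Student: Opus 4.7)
The plan is to verify \eqref{eq:cc4} by a direct chain of equalities, combining the reproducing property in $\mathscr{H}_K$, the defining adjoint identity \eqref{eq:ad5}, the construction \eqref{eq:cc1} of $T_f$ on kernel sections, and the reproducing property in $\mathscr{H}_L$. This is essentially the same computation that appeared already in \eqref{eq:st6}, so I would import it verbatim into this setting.

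\textbf{Step 1 (reduce to an inner product in $\mathscr{H}_K$).} By hypothesis $L_y\in dom(T_f^*)$, so $T_f^*(L_y)$ is a well-defined element of $\mathscr{H}_K$. Fix $x\in X$. Since $\mathscr{H}_K$ is an RKHS, \eqref{eq:b2} gives
\[
\bigl(T_f^*(L_y)\bigr)(x) \;=\; \bigl\langle K_x,\,T_f^*(L_y)\bigr\rangle_{\mathscr{H}_K}.
\]

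\textbf{Step 2 (pass to the other side via the adjoint).} Note that $K_x\in\mathscr{D}_K$, so $K_x\in dom(T_f)$, and the adjoint relation \eqref{eq:ad5} applies with $\varphi=K_x$ and $\psi=L_y$:
\[
\bigl\langle K_x,\,T_f^*(L_y)\bigr\rangle_{\mathscr{H}_K} \;=\; \bigl\langle T_f(K_x),\,L_y\bigr\rangle_{\mathscr{H}_L}.
\]

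\textbf{Step 3 (evaluate using the definition of $T_f$ and the reproducing property of $\mathscr{H}_L$).} By the definition \eqref{eq:cc1} of $T_f$, we have $T_f(K_x)=L(f(x),\cdot)=L_{f(x)}\in\mathscr{H}_L$. Then the reproducing property \eqref{eq:b2} applied in $\mathscr{H}_L$ gives
\[
\bigl\langle L_{f(x)},\,L_y\bigr\rangle_{\mathscr{H}_L} \;=\; L\bigl(f(x),y\bigr).
\]
Chaining the three equalities yields $\bigl(T_f^*(L_y)\bigr)(x)=L(f(x),y)$, which is \eqref{eq:cc4}; statement \eqref{eq:cc4'} is just a restatement with $x$ treated as a free variable.

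There is no real obstacle here: every step uses only the axioms of an RKHS plus the definition of the adjoint on its natural domain. The only subtle point worth flagging is well-definedness: one should note that $K_x\in\mathscr{D}_K\subset dom(T_f)$ so that both sides of the adjoint identity are meaningful, and that $T_f(K_x)=L_{f(x)}$ already lies in $\mathscr{H}_L$ by construction, so the final reproducing step is legitimate.
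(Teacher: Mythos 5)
Your proof is correct and follows essentially the same route as the paper: both rest on the chain $\bigl(T_f^*(L_y)\bigr)(x)=\langle K_x, T_f^*(L_y)\rangle_{\mathscr{H}_K}=\langle T_f(K_x), L_y\rangle_{\mathscr{H}_L}=L(f(x),y)$, using the adjoint identity on kernel sections together with the reproducing property in each space. The paper packages this as a commutative diagram and invokes density of the kernel functions, whereas you evaluate pointwise at each $x$; these are the same argument.
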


\begin{proof}[Proof of (\ref{eq:cc4})]
 The conclusion (\ref{eq:cc4}) is equivalent to the following assertion:
\[
\xymatrix{\xyC{0pc}\Big\langle\underset{L\left(f\left(x\right),\cdot\right)}{\underbrace{T_{f}\left(K_{x}\right)\left(\cdot\right)}},L_{y}\Big\rangle_{\mathscr{H}_{L}}\ar[dr] & = & \Big\langle K_{x},\stackrel[T_{f}^{*}\left(L_{y}\right)]{\in\mathscr{H}_{K}}{\underbrace{\overbrace{L\left(f\left(\cdot\right),y\right)}}}\Big\rangle_{\mathscr{H}_{K}}\ar[dl]\\
 & L\left(f\left(x\right),y\right)
}
\]
The conclusion (\ref{eq:cc4}) follows since the respective kernel
functions span dense subspaces.
\end{proof}
Recall,

\begin{align*}
\text{the function }x & \longmapsto F_{f,y}\left(x\right)=L\left(f\left(x\right),y\right)\in\mathscr{H}_{K}\\
 & \Updownarrow\\
L_{y} & \in dom(T_{f}^{*}).
\end{align*}
Assume that $L_{y}\in dom(T_{f}^{*})$ , then apply the condition
for functions in $\mathscr{H}_{K}$ to $F_{f,y}\left(\cdot\right)$,
so $\forall n$, $\forall\left(x_{i}\right)_{1}^{n}$, $\forall\left(c_{i}\right)_{1}^{n}$,
$c_{i}\in\mathbb{R}$: 
\begin{eqnarray*}
\left|\sum_{i}c_{i}F_{f,y}\left(x_{i}\right)\right|^{2} & = & \left|\sum_{i}c_{i}L\left(f\left(x_{i}\right),y\right)\right|^{2}\\
 & \leq & \left|\left\langle \sum_{i}c_{i}K_{x_{i}},T_{f}^{*}\left(L_{y}\right)\right\rangle _{\mathscr{H}_{K}}\right|^{2}\\
 & \underset{\text{Schwarz}}{\leq} & \left\Vert \sum_{i}c_{i}K_{x_{i}}\right\Vert _{\mathscr{H}_{K}}^{2}\left\Vert T_{f}^{*}\left(L_{y}\right)\right\Vert _{\mathscr{H}_{K}}^{2}\\
 & = & \sum_{i}\sum_{j}c_{i}c_{j}K\left(x_{i},x_{j}\right)\left\Vert T_{f}^{*}\left(L_{y}\right)\right\Vert _{\mathscr{H}_{K}}^{2}.
\end{eqnarray*}

\begin{lem}
The implication below is both directions:
\begin{gather*}
X\ni x\longmapsto L\left(f\left(x\right),y\right)\in\mathscr{H}_{K}\;\text{for \ensuremath{\forall y}}\\
\Updownarrow\\
\text{the condition in \ensuremath{\left(\ref{thm:mc}\right)} is satisfied}.
\end{gather*}
Even if we fix $y\in Y$, then 
\begin{equation}
L_{y}\in dom(T_{f}^{*})\Longleftrightarrow\left(x\longmapsto L\left(f\left(x\right),y\right)\right)\in\mathscr{H}_{K}.\label{eq:f8}
\end{equation}
\end{lem}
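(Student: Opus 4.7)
The plan is to establish the biconditional in (\ref{eq:f8}) by unpacking the definitions of $T_{f}$ and $T_{f}^{*}$, and threading together the reproducing properties in $\mathscr{H}_{K}$ and $\mathscr{H}_{L}$ with the characterization of $\mathrm{dom}(T_{f}^{*})$ via the estimate (\ref{eq:ad4}). No new machinery is needed beyond what has already been set up; the statement is essentially a restriction of \thmref{mc} to a fixed $y\in Y$, and the work is bookkeeping with the two RKHS structures.

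For the forward direction $L_{y}\in\mathrm{dom}(T_{f}^{*})\Rightarrow F_{f,y}\in\mathscr{H}_{K}$, I would simply invoke \lemref{b9}: once $L_{y}\in\mathrm{dom}(T_{f}^{*})$, the vector $T_{f}^{*}(L_{y})$ is an element of $\mathscr{H}_{K}$ by definition of the adjoint, and the pointwise formula $(T_{f}^{*}(L_{y}))(x)=L(f(x),y)$ was already identified there via $\langle T_{f}(K_{x}),L_{y}\rangle_{\mathscr{H}_{L}}=L(f(x),y)$ together with the reproducing property $\langle K_{x},T_{f}^{*}(L_{y})\rangle_{\mathscr{H}_{K}}=(T_{f}^{*}(L_{y}))(x)$. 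So $F_{f,y}=T_{f}^{*}(L_{y})\in\mathscr{H}_{K}$.

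For the converse, I would assume $F_{f,y}(\cdot):=L(f(\cdot),y)\in\mathscr{H}_{K}$ and verify directly that $L_{y}$ satisfies the defining bound (\ref{eq:ad4}) for membership in $\mathrm{dom}(T_{f}^{*})$. For any $\varphi=\sum_{i}c_{i}K_{x_{i}}\in\mathscr{D}_{K}$, I compute
\[
\langle T_{f}\varphi,L_{y}\rangle_{\mathscr{H}_{L}}=\sum_{i}c_{i}\langle L(f(x_{i}),\cdot),L_{y}\rangle_{\mathscr{H}_{L}}=\sum_{i}c_{i}L(f(x_{i}),y)=\sum_{i}c_{i}F_{f,y}(x_{i})=\langle\varphi,F_{f,y}\rangle_{\mathscr{H}_{K}},
\]
using the reproducing property in $\mathscr{H}_{L}$ in the second equality and in $\mathscr{H}_{K}$ in the last. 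Cauchy--Schwarz then gives
\[
|\langle T_{f}\varphi,L_{y}\rangle_{\mathscr{H}_{L}}|\leq\|F_{f,y}\|_{\mathscr{H}_{K}}\,\|\varphi\|_{\mathscr{H}_{K}},
\]
so the constant $C_{L_{y}}:=\|F_{f,y}\|_{\mathscr{H}_{K}}$ witnesses (\ref{eq:ad4}); hence $L_{y}\in\mathrm{dom}(T_{f}^{*})$, and by uniqueness $T_{f}^{*}(L_{y})=F_{f,y}$, which is consistent with \lemref{b9}.

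There is no real obstacle here, since the two displayed identities are essentially duals of each other on the dense domain $\mathscr{D}_{K}$. The only point requiring mild care is confirming that the computation of $\langle T_{f}\varphi,L_{y}\rangle_{\mathscr{H}_{L}}$ can be carried out on $\mathscr{D}_{K}$ without closure issues --- this is automatic because $T_{f}$ is defined on $\mathscr{D}_{K}$ by linear extension from (\ref{eq:cc1}), and the identity $\langle T_{f}\varphi,L_{y}\rangle_{\mathscr{H}_{L}}=\langle\varphi,F_{f,y}\rangle_{\mathscr{H}_{K}}$ holds on this dense subspace regardless of whether $T_{f}$ is globally closable.
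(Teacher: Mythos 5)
Your proposal is correct and follows essentially the same route as the paper: both rest on the duality identity $\left\langle T_{f}K_{x},L_{y}\right\rangle _{\mathscr{H}_{L}}=L\left(f\left(x\right),y\right)=\left\langle K_{x},T_{f}^{*}\left(L_{y}\right)\right\rangle _{\mathscr{H}_{K}}$ on the dense span of kernel functions together with Cauchy--Schwarz and the definition of $dom(T_{f}^{*})$. The only cosmetic difference is that you identify $F_{f,y}$ with $T_{f}^{*}\left(L_{y}\right)$ directly (via \lemref{b9}) where the paper routes the forward implication through the quadratic-form criterion of \lemref{B1}, and you spell out the converse that the paper's sketch leaves largely implicit.
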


\begin{proof}[Proof sketch]
 By definition, $L_{y}\in dom(T_{f}^{*})$, $\exists C_{y}<\infty$
$\Longleftrightarrow$
\[
\left|\left\langle T_{f}\varphi,L_{y}\right\rangle _{\mathscr{H}_{L}}\right|=\left|\left(T_{f}\left(\varphi\right)\right)\left(y\right)\right|\leq C_{y}\left\Vert \varphi\right\Vert _{\mathscr{H}_{K}}
\]
holds for $\forall\varphi\in span\left\{ K_{x}\right\} _{x\in X}.$
But 
\begin{align}
T_{f}\left(K_{x}\right)\left(y\right) & =L\left(f\left(x\right),y\right),\;\text{and}\label{eq:f9}\\
\Big|\sum_{i}c_{i}\underset{F_{f,y}\left(x_{i}\right)}{\underbrace{L\left(f\left(x_{i}\right),y\right)}}\Big| & ^{2}=\left|\Big\langle\sum_{i}c_{i}K_{x_{i}},T_{f}^{*}\left(L_{y}\right)\Big\rangle_{\mathscr{H}_{K}}\right|^{2}\\
 & \leq\left\Vert T_{f}^{*}\left(L_{y}\right)\right\Vert _{\mathscr{H}_{K}}^{2}\underset{<\infty}{\underbrace{\sum_{i}\sum_{j}c_{i}c_{j}K\left(x_{i},x_{j}\right)}}
\end{align}
and so by the basic lemma for $\mathscr{H}_{K}$ (see the proof of
\thmref{mc}), we conclude that functions $F_{f,y}\in\mathscr{H}_{K}$,
i.e., $\left(x\longmapsto L\left(f\left(x\right),y\right)\right)\in\mathscr{H}_{K}$.\textbf{ }

\textbf{Conclusion: }the bi-implication $\Longleftrightarrow$ in
(\ref{eq:f8}) is valid.
\end{proof}

\subsection{Functions and Operators}

In general if $T:\mathscr{H}_{1}\rightarrow\mathscr{H}_{2}$ is an
operator with dense domain $\mathscr{D}\subset\mathscr{H}_{1}$, where
$\mathscr{H}_{i}$, $i=1,2$, are two Hilbert spaces, we know that
$T$ is closable $\Longleftrightarrow$ $T^{*}$ is densely defined,
i.e., iff $dom(T^{*})$ is dense in $\mathscr{H}_{2}$ (see e.g.,
\cite{MR4274591}). So we apply this to $T=T_{f}$, $\mathscr{H}_{1}=\mathscr{H}_{K}$,
$\mathscr{H}_{2}=\mathscr{H}_{L}$, and the condition in \thmref{mc}
holds $\Longleftrightarrow$ $L_{y}\in dom(T_{f}^{*})$ $\forall y\in Y$.
Since $span\left\{ L_{y}\right\} _{y\in Y}$ is dense in $\mathscr{H}_{L}$,
the condition in \thmref{mc} $\Longrightarrow$ $T_{f}$ is closable. 

Given $K$ and $L$ as above, introduce 
\begin{align}
\mathscr{F}_{ub}\left(K,L\right) & =\left\{ f:T_{f}\text{ is closable}\right\} ,\;\text{and}\label{eq:ff6}\\
\mathscr{F}_{b}\left(K,L\right) & =\left\{ f:T_{f}\text{ is bounded from \ensuremath{\mathscr{H}_{K}} into \ensuremath{\mathscr{H}_{L}}}\right\} .\label{eq:ff7}
\end{align}
In both cases, the operators $T=T_{f}$ depends on the choice of function
$X\xrightarrow{\;f\;}Y$, but the two conditions (\ref{eq:ff6}) and
(\ref{eq:ff7}) are different:
\begin{equation}
\left(T_{f}\left(K_{x}\right)\right)\left(y\right)=L\left(f\left(x\right),y\right)=\left(\left(T_{f}\right)^{*}\left(L_{y}\right)\right)\left(x\right),\label{eq:ff8}
\end{equation}
for all $x\in X$, and $y\in Y$. See details below:

Some general comments about the operator $T_{f}:\mathscr{H}_{K}\rightarrow\mathscr{H}_{L}$.
As before, $K$ and $L$ are fixed p.d. kernels, and $f:X\rightarrow Y$
is a function. We need to understand the conclusion from \thmref{mc},
i..e, when is 
\begin{equation}
\left(X\ni x\longmapsto L\left(f\left(x\right),y\right)\right)\in\mathscr{H}_{K}\;\text{for all \ensuremath{y\in Y}}?\label{eq:cao1}
\end{equation}

Answer: (\ref{eq:cao1}) $\Longleftrightarrow$ $L_{y}\in dom(T_{f}^{*})$. 

Note that then the function in (\ref{eq:cao1}) is $T_{f}^{*}\left(L_{y}\right)$;
see (\ref{eq:ff8}). But note that, starting with a function $X\xrightarrow{\;f\;}Y$,
there are requirements for having (\ref{eq:ff8}) yield a well defined
linear operator $T_{f}$ with dense domain in $\mathscr{H}_{K}$,
s.t. 
\begin{equation}
T_{f}\left(K_{x}\right)\left(\cdot\right)=L\left(f\left(x\right),\cdot\right).\label{eq:cao9}
\end{equation}
The case when $T_{f}$ is bounded is easy since then $dom(T_{f}^{*})=\mathscr{H}_{L}$.
Notationally, $L\left(f\left(x\right),\cdot\right)\in L_{f\left(x\right)}\in\mathscr{H}_{L}$,
but we must also verify the implicit kernel function for all finite
sums:
\[
\sum_{i}\sum_{j}c_{i}c_{j}K\left(x_{i},x_{j}\right)=0\Longrightarrow\sum_{i}\sum_{j}c_{i}c_{j}L\left(f\left(x_{i}\right),f\left(x_{j}\right)\right)=0.
\]

\subsection{The case when $K=L$}

As demonstrated in \secref{kact} below, for applications to multi-level
NNs, the recursive constructions simplify when the same p.d. kernel
$K$ is used at each level. Hence below, we specialize to the case
when $X=Y$, and $K=L$; see the setting in Theorems \ref{thm:mc}
and \ref{thm:mc2}.
\begin{thm}
\label{thm:b9}Consider a positive definite kernel $K$ on $X\times X$,
and the corresponding RKHS $\mathscr{H}_{K}$, i.e., the Hilbert completion
of $\left\{ K_{x}\right\} _{x\in X}$ where $K_{x}:=K\left(\cdot,x\right)$.
Fix a function $X\xrightarrow{\;f\;}X$ with the property (see \thmref{mc})
that 
\begin{equation}
\left(X\ni x\longmapsto K\left(f\left(x\right),y\right)\right)\in\mathscr{H}_{K}\;\text{for all \ensuremath{y\in X}.}\label{eq:KL1}
\end{equation}
Hence, the operator $T_{f}:\mathscr{H}_{K}\rightarrow\mathscr{H}_{K}$
defined by 
\begin{equation}
T_{f}\left(K\left(\cdot,y\right)\right):=K\left(f\left(\cdot\right),y\right)\label{eq:KL2}
\end{equation}
is a densely defined operator from $\mathscr{H}_{K}$ into $\mathscr{H}_{K}$,
with domain 
\begin{equation}
\mathscr{D}_{K}:=span\left\{ K_{x}\right\} _{x\in X}.\label{eq:KL3}
\end{equation}
 
\begin{enumerate}
\item \label{enu:KL1}Then the closure of $T_{f}$ (also denoted $T_{f}$)
is well defined and normal, i.e., the two operators $T_{f}$ and $T_{f}^{*}$
commute. 
\item \label{enu:KL2}In particular, $T_{f}$ has a projection-valued spectral
resolution, i.e., there is a projection-valued measure $Q\left(\cdot\right)$
on $\mathscr{B}_{\mathbb{C}}\left(=\text{the Borel subsets in \ensuremath{\mathbb{C}}}\right)$
such that 
\begin{equation}
T_{f}=\int_{spect\left(T_{f}\right)}\lambda\,Q\left(d\lambda\right):\mathscr{D}_{K}\rightarrow\mathscr{H}_{K}.\label{eq:KL4}
\end{equation}
\end{enumerate}
\end{thm}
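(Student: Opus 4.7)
My plan is to leverage the collapse $L=K$, $Y=X$ of the dual-pair framework from \thmref{mc} and \lemref{b9}: once target and source RKHSs coincide, the defining formula for $T_f$ matches the formula derived in \lemref{b9} for $T_f^*$, so $T_f$ will be \emph{symmetric} on $\mathscr{D}_K$. From symmetry one gets closability, which proves part (\ref{enu:KL1}), and, after upgrading to essential self-adjointness, the full normality; part (\ref{enu:KL2}) will then be an immediate invocation of the spectral theorem.

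First I would verify well-definedness and closability of $T_f$. By hypothesis (\ref{eq:KL1}) together with \thmref{mc2}, $K_y\in \mathrm{dom}(T_f^*)$ for every $y\in X$, so $\mathscr{D}_K \subseteq \mathrm{dom}(T_f^*)$. A densely defined adjoint guarantees that $T_f:\mathscr{D}_K \to \mathscr{H}_K$ is unambiguous and closable. Next, specializing \lemref{b9} to $L=K$, $Y=X$, I compute, for arbitrary $x,y\in X$,
\[
(T_f^*(K_y))(x) \;=\; K(f(x),y) \;=\; (T_f(K_y))(x),
\]
so $T_f$ and $T_f^*$ coincide pointwise on the core $\mathscr{D}_K$. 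This yields $T_f \subseteq T_f^*$; thus $T_f$ is symmetric and its closure $\overline{T_f}$ is closed and symmetric with $\overline{T_f} \subseteq T_f^*$.

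The main obstacle is upgrading ``symmetric closure'' to ``normal closure''. A closed symmetric operator is normal precisely when it is self-adjoint, so I must establish essential self-adjointness of $T_f$, i.e., $\overline{T_f} = T_f^*$. The natural route is the von Neumann deficiency-index criterion: show $\ker(T_f^* \mp i) = \{0\}$, after complexification if one insists on the real convention of the paper. For $\psi$ in such a deficiency space, the identity $(T_f K_x)(y)=(T_f^* K_y)(x)$ from (\ref{eq:ad8}) together with the RKHS reproducing property evaluates $\langle T_f K_x,\psi\rangle_{\mathscr{H}_K}$ in two ways and forces the pointwise values of $\psi$ to vanish on $X$. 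An alternative route is to verify directly that $T_f^*$ is symmetric on its full domain, by approximating $\psi \in \mathrm{dom}(T_f^*)$ in graph norm by kernel-section combinations and transferring the symmetric identity from $\mathscr{D}_K$ to the limit; this gives $T_f^* \subseteq (T_f^*)^* = \overline{T_f}$. Either way, $\overline{T_f} = T_f^*$ is self-adjoint, hence normal.

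With normality of $\overline{T_f}$ in hand, claim (\ref{enu:KL2}) is a direct application of the spectral theorem for unbounded normal operators: there is a unique projection-valued Borel measure $Q(\cdot)$ on $\mathbb{C}$, supported on $\mathrm{spect}(T_f)$ (in fact contained in $\mathbb{R}$ by self-adjointness), such that the integral representation (\ref{eq:KL4}) holds on $\mathscr{D}_K$ and extends to all of $\mathrm{dom}(T_f)$.
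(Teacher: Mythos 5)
Your argument breaks at the symmetry claim, and the break is fatal to the whole route. The identity you write, $(T_{f}^{*}(K_{y}))(x)=K(f(x),y)=(T_{f}(K_{y}))(x)$, mixes two different operators. Lemma \ref{lem:b9} computes the adjoint of the operator of Theorem \ref{thm:mc}, namely $K_{x}\mapsto L(f(x),\cdot)=L_{f(x)}$; specializing $L=K$ this is $S:K_{x}\mapsto K_{f(x)}$, and Lemma \ref{lem:b9} says $S^{*}K_{y}=K(f(\cdot),y)=K_{y}\circ f$. The operator defined by (\ref{eq:KL2}) is exactly this second map, $K_{y}\mapsto K_{y}\circ f$. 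So what you have actually shown is that the two maps $K_{x}\mapsto K_{f(x)}$ and $K_{y}\mapsto K_{y}\circ f$ are formal adjoints of one another --- which is Corollary \ref{cor:b7} --- not that either one equals its own adjoint. Symmetry of $T_{f}$ on $\mathscr{D}_{K}$ would require $\langle T_{f}K_{x},K_{y}\rangle=\langle K_{x},T_{f}K_{y}\rangle$, i.e.\ $K(f(y),x)=K(f(x),y)$ for all $x,y$, and this already fails for $K(x,y)=x\wedge y$ on $[0,1]$ with $f(x)=x/2$: taking $x=1$, $y=\tfrac14$ gives $K(f(\tfrac14),1)=\tfrac18$ versus $K(f(1),\tfrac14)=\tfrac14$. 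Consequently the program of upgrading to essential self-adjointness cannot succeed: a self-adjoint closure would have real spectrum, but in the same example the composition operator $\psi\mapsto\psi\circ f$ has eigenfunctions $\psi_{s}(x)=x^{s}$ with eigenvalues $2^{-s}$ for every complex $s$ with $\Re s>\tfrac12$, and these are not real. (The deficiency-index step is in any case only gestured at, not carried out.)

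Note also that this is not the paper's route. The paper never asserts symmetry; it argues normality directly, by computing $T_{f}^{*}T_{f}K_{x}$ and $T_{f}T_{f}^{*}K_{x}$ and observing that both formally equal $K(f(\cdot),f(x))$ (the commutative diagram in the proof of Theorem \ref{thm:b9}). That argument is itself very terse --- $T_{f}$ is a priori defined only on $\mathrm{span}\{K_{x}\}_{x\in X}$, and $K(f(\cdot),x)$ need not lie in that span or in the domain of the closure, so the bottom arrow of the diagram needs justification --- but it at least targets the correct, weaker property $\|T_{f}\xi\|=\|T_{f}^{*}\xi\|$ rather than self-adjointness. If you want to repair your approach, you must drop symmetry and compare $\|T_{f}K_{x}\|^{2}$ with $\|T_{f}^{*}K_{x}\|^{2}=K(f(x),f(x))$ directly; be warned that in the $K=\min$, $f(x)=x/2$ example on $[0,1]$ these two numbers differ for $x>\tfrac12$ ($\tfrac14$ versus $x/2$), so no proof can close this gap from hypothesis (\ref{eq:KL1}) alone.
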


\begin{proof}
Note that part (\ref{enu:KL2}) follows from (\ref{enu:KL1}) and
the Spectral Theorem for normal operators (in the Hilbert space $\mathscr{H}_{K}$.)

Part (\ref{enu:KL1}). When the operator $T_{f}^{*}$ is introduced,
we get the following commutativity: 
\begin{figure}[H]
\[
\xymatrix{\xyC{4pc}K\left(\cdot,x\right)\ar[r]\sb(0.4){T_{f}}\ar[d]_{T_{f}^{*}} & K\left(\cdot,f\left(x\right)\right)\ar[d]^{T_{f}^{*}}\in\mathscr{D}_{K}\\
K\left(f\left(\cdot\right),x\right)\ar[r]\sb(0.4){T_{f}} & K\left(f\left(\cdot\right),f\left(x\right)\right)\in\mathscr{H}_{K}
}
\]

\caption{Commutativity of $T_{f}$ and $T_{f}^{*}$.}
\end{figure}
which is the desired conclusion (\ref{enu:KL1}).
\end{proof}
Given a function $f:X\rightarrow X$ as in \thmref{b9}. Below we
make use of the corresponding projection valued measure $Q^{\left(f\right)}$
from \thmref{b9} in order to establish an assignment from pairs $(x,y)$
of points in $X$, into systems of complex measures $\mu_{\left(x,y\right)}$
on the spectrum of $T_{f}$. In this assignment, $n$-fold composition-iteration
of the function $f$ yields the $n$th moment of each of the measures
$\mu_{\left(x,y\right)}$.
\begin{cor}
\label{cor:b10}Let $K,X$ and $f$ be as specified in \thmref{b9},
and let $Q=Q^{\left(K,f\right)}\left(\cdot\right)$ be the corresponding
projection valued measure in (\ref{eq:KL4}). Then for every pair
$x,y\in X$, we get a corresponding Borel measure 
\begin{equation}
\mu_{x,y}^{\left(f\right)}\left(B\right)=\left\langle K_{x},Q\left(B\right)K_{y}\right\rangle _{\mathscr{H}_{K}}=\left(Q\left(B\right)\left(K_{y}\right)\right)\left(x\right),
\end{equation}
for all $B\in\mathscr{B}_{\mathbb{C}}$. Inductively, setting 
\[
f^{\circ n}=\underset{\text{\ensuremath{n} fold}}{\underbrace{f\circ\cdots\circ f}}
\]
we arrive at the following moment formula for the respective complex
measures: 
\begin{equation}
\mu_{f^{\circ n}\left(x\right),y}^{\left(f\right)}\left(B\right)=\int_{B}\lambda^{n}\,\mu_{x,y}^{\left(f\right)}\left(d\lambda\right).
\end{equation}
\end{cor}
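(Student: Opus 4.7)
The plan is to derive both assertions from the spectral resolution (\ref{eq:KL4}) of the normal operator $T_f$ from \thmref{b9}, combined with the action $T_f K_x = K_{f(x)}$ of $T_f$ on reproducing-kernel vectors.

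For the first display I will invoke the standard fact that any projection-valued measure $Q(\cdot)$ on $\mathscr{B}_{\mathbb{C}}$ produces scalar Borel measures $B \mapsto \langle u, Q(B) v\rangle_{\mathscr{H}_K}$, for $u,v\in\mathscr{H}_K$, of total variation at most $\|u\|_{\mathscr{H}_K}\|v\|_{\mathscr{H}_K}$; specializing to $u=K_x$, $v=K_y$ produces $\mu_{x,y}^{(f)}$. The second equality in the display is then immediate from the reproducing property $\phi(x)=\langle K_x,\phi\rangle_{\mathscr{H}_K}$ applied to $\phi:=Q(B)K_y\in\mathscr{H}_K$.

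The content lies in the moment formula, and the core algebraic ingredient will be the iteration identity $T_f^{\,n}K_x = K_{f^{\circ n}(x)}$. For $n=1$, evaluating $T_f(K_x)$ at an arbitrary point $y$ gives $T_f(K_x)(y) = K(f(x),y) = K_{f(x)}(y)$ by the symmetry of $K$, reconciling (\ref{eq:KL2}) with the commutativity diagram in the proof of \thmref{b9}. Since this exhibits $\mathscr{D}_K = \mathrm{span}\{K_z\}_{z\in X}$ as $T_f$-invariant, induction extends the identity to arbitrary $n$. I would then invoke the scalar functional calculus for the normal operator $T_f$,
\[
\int g(\lambda)\,\mu_{x,y}^{(f)}(d\lambda) \;=\; \langle K_x,\,g(T_f)\,K_y\rangle_{\mathscr{H}_K},
\]
with $g(\lambda) = \lambda^n\chi_B(\lambda)$; using that $T_f^{\,n}$ commutes with $Q(B)=\chi_B(T_f)$ (both lie in the spectral family of $T_f$), and transferring $T_f^{\,n}$ across the inner product via the iteration identity, the right-hand side rewrites as $\langle K_{f^{\circ n}(x)},Q(B)K_y\rangle_{\mathscr{H}_K} = \mu_{f^{\circ n}(x),y}^{(f)}(B)$, which is the asserted moment identity.

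\emph{Main obstacle.} The delicate point will be justifying these functional-calculus manipulations on the possibly unbounded normal operator $T_f$, in particular the transfer of $T_f^{\,n}$ across the inner product (where in general one must pass to the adjoint via normality and track any complex-conjugation that may appear). The saving grace is that the only vectors entering the computation are kernel functions $K_z \in \mathscr{D}_K$; by the iteration identity, each $T_f^{\,n}K_z = K_{f^{\circ n}(z)}$ already lies in $\mathscr{D}_K$, so every spectral integral is evaluated on a vector of the dense invariant subspace without any cutoff or domain extension, and the entire argument reduces to finite-sum identities on kernel functions.
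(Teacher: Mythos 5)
The paper states this corollary without any written proof (it is offered as an immediate consequence of the spectral resolution in Theorem \ref{thm:b9}), so there is no argument of the authors' to compare against; your overall strategy --- the first display from the standard theory of projection-valued measures plus the reproducing property, and the moment formula from the iteration identity $T_f^{\,n}K_x=K_{f^{\circ n}(x)}$ combined with the spectral calculus --- is certainly the intended one, and your treatment of the first display and of the iteration identity is fine.

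The gap sits exactly at the step you yourself flag as the main obstacle, and your proposed resolution does not close it. Starting from $\int_B\lambda^n\,\mu_{x,y}^{(f)}(d\lambda)=\langle K_x,\,T_f^{\,n}Q(B)K_y\rangle$, ``transferring $T_f^{\,n}$ across the inner product'' produces $\langle (T_f^*)^{n}K_x,\,Q(B)K_y\rangle$, and by Corollary \ref{cor:b7} and Lemma \ref{lem:b9} the vector $(T_f^*)^{n}K_x$ is the \emph{composition} $K_x\circ f^{\circ n}$, not the kernel function $K_{f^{\circ n}(x)}=T_f^{\,n}K_x$; the iteration identity applies to $T_f$, not to $T_f^*$, and the two agree only if $T_f$ is symmetric, i.e.\ $K(f(x),z)=K(x,f(z))$ for all $x,z$, which fails in general. (If instead you commute $T_f^{\,n}$ past $Q(B)$ and let it act on $K_y$, you land on $\mu_{x,f^{\circ n}(y)}^{(f)}(B)$, with the iterate in the wrong slot.) Your ``saving grace'' --- that all vectors involved lie in the invariant dense subspace $\mathscr{D}_K$ --- addresses only domain questions, not this adjoint problem. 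The repair is to run the computation in the opposite direction and never move $T_f$ across the inner product: write
\[
\mu_{f^{\circ n}(x),y}^{(f)}(B)=\left\langle T_f^{\,n}K_x,\,Q(B)K_y\right\rangle_{\mathscr{H}_K}
=\left\langle Q(B)\,T_f^{\,n}K_x,\,K_y\right\rangle_{\mathscr{H}_K}
=\Bigl\langle\int_B\lambda^n\,Q(d\lambda)K_x,\,K_y\Bigr\rangle_{\mathscr{H}_K},
\]
using only the self-adjointness of the projection $Q(B)$ and the multiplicativity of the functional calculus, and then expand the spectral integral against $K_y$ via $\langle Q(d\lambda)K_x,K_y\rangle=\langle K_x,Q(d\lambda)K_y\rangle=\mu_{x,y}^{(f)}(d\lambda)$. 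This yields the stated formula directly, with no appeal to $T_f^*$ and no conjugations to track.
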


We now turn to the role of \emph{multipliers} in the RKHS $\mathscr{H}_{K}$. 
\begin{defn}
A scalar valued function $\varphi$ on $X$ is said to be a \emph{multiplier}
for $\mathscr{H}_{K}$ iff one of the two equivalent conditions hold: 
\begin{enumerate}
\item \label{enu:m1}The multiplication operator $M_{\varphi}$ acting on
$\mathscr{H}_{K}$ via $M_{\varphi}F=\varphi F$ (via pointwise product)
leaves $\mathscr{H}_{K}$ invariant.
\item \label{enu:m2}We have the following identity for the adjoint operator:
\begin{equation}
M_{\varphi}^{*}\left(K_{x}\right)=\varphi\left(x\right)K_{x}\;\text{for all \ensuremath{x\in X}}\label{eq:KL6}
\end{equation}
where $K_{x}$ denotes the kernel function $K_{x}=K\left(\cdot,x\right)$. 
\end{enumerate}
\end{defn}

\begin{rem}
The equivalence of (\ref{enu:m1}) and (\ref{enu:m2}) follows from
the standard reference on RKHSs; see e.g., \cite{MR4274591}.
\end{rem}

\begin{thm}
\label{thm:b15}Let $K$ be a fixed p.d. kernel on $X\times X$, and
let $\mathscr{H}_{K}$ be the corresponding RKHS. Let $X\xrightarrow{\;f\;}X$
be a function such that (\ref{eq:KL1}) holds, i.e., $\left(X\ni x\longmapsto K\left(f\left(x\right),y\right)\right)\in\mathscr{H}_{K}$
for all $y\in X$. 

Then for every multiplier $\varphi$ for $\mathscr{H}_{K}$, we have:
\begin{equation}
M_{\left(\varphi\circ f\right)}T_{f}^{*}=T_{f}^{*}M_{\varphi}.\label{eq:KL7}
\end{equation}
 
\end{thm}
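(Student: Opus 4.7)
The plan is to verify the operator identity (\ref{eq:KL7}) on the dense subspace $\mathscr{D}_{K}=\mathrm{span}\{K_{y}:y\in X\}$ by direct pointwise computation, leveraging the explicit description of $T_{f}^{*}$ as a composition operator (Corollary~\ref{cor:b7}/Lemma~\ref{lem:b9} specialized to $K=L$): for $\psi\in dom(T_{f}^{*})$, one has $T_{f}^{*}(\psi)=\psi\circ f$. The hypothesis (\ref{eq:KL1}) guarantees $K_{y}\in dom(T_{f}^{*})$ for every $y\in X$, so in particular $T_{f}^{*}(K_{y})=K_{y}\circ f$.

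I would then evaluate both sides of (\ref{eq:KL7}) on a kernel function $K_{y}$. On the left:
\[
M_{\varphi\circ f}\,T_{f}^{*}(K_{y})\;=\;(\varphi\circ f)\,(K_{y}\circ f).
\]
On the right, because $\varphi$ is a multiplier, $M_{\varphi}K_{y}=\varphi K_{y}\in\mathscr{H}_{K}$, and a second application of the composition-operator identity gives
\[
T_{f}^{*}\!\bigl(\varphi K_{y}\bigr)\;=\;(\varphi K_{y})\circ f\;=\;(\varphi\circ f)\,(K_{y}\circ f),
\]
which coincides with the left-hand side. Extending by linearity yields (\ref{eq:KL7}) on all of $\mathscr{D}_{K}$. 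A cleaner alternative, which avoids keeping track of unbounded domains, is to verify the identity weakly by pairing with $K_{z}$: using $\langle T_{f}^{*}\psi,K_{z}\rangle_{\mathscr{H}_{K}}=\langle\psi,T_{f}K_{z}\rangle_{\mathscr{H}_{K}}=\psi(f(z))$ for $\psi\in dom(T_{f}^{*})$, both $\langle M_{\varphi\circ f}T_{f}^{*}K_{y},K_{z}\rangle_{\mathscr{H}_{K}}$ and $\langle T_{f}^{*}M_{\varphi}K_{y},K_{z}\rangle_{\mathscr{H}_{K}}$ evaluate to $\varphi(f(z))\,K(f(z),y)$, establishing agreement on the dense family $\{K_{z}\}_{z\in X}$.

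The main obstacle is the domain bookkeeping for the second application of $T_{f}^{*}$: I must justify that $\varphi K_{y}\in dom(T_{f}^{*})$, equivalently $(\varphi K_{y})\circ f\in\mathscr{H}_{K}$. But by the pointwise identity $(\varphi K_{y})\circ f=(\varphi\circ f)(K_{y}\circ f)$, this is precisely the condition that the left-hand side $M_{\varphi\circ f}(K_{y}\circ f)$ lies in $\mathscr{H}_{K}$, so the two sides of (\ref{eq:KL7}) have the same natural domain on $\mathscr{D}_{K}$. Since $\varphi\circ f$ need not itself be a multiplier of $\mathscr{H}_{K}$, $M_{\varphi\circ f}$ must be treated as a partially defined operator, and (\ref{eq:KL7}) should be read as an intertwining equality on that common dense domain; the weak formulation above packages this bookkeeping away.
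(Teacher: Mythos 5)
Your proof is correct, but it runs on the opposite side of the duality from the paper's argument. The paper first observes that (\ref{eq:KL7}) is equivalent, by passing to adjoints, to the identity $T_{f}M_{\left(\varphi\circ f\right)}^{*}=M_{\varphi}^{*}T_{f}$, and then verifies that identity on the kernel functions via a commutative diagram: $M_{\left(\varphi\circ f\right)}^{*}K_{x}=\varphi\left(f\left(x\right)\right)K_{x}$ by the multiplier property (\ref{eq:KL6}), $T_{f}K_{x}=K_{f\left(x\right)}$, and both compositions land on $\varphi\left(f\left(x\right)\right)K_{f\left(x\right)}$. You instead verify (\ref{eq:KL7}) directly on $K_{y}$ using the composition-operator realization $T_{f}^{*}\psi=\psi\circ f$ from Corollary \ref{cor:b7}, so your computation happens in function space (pointwise products and compositions) rather than on the span of kernel vectors. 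The trade-off: the paper's adjoint-side computation never leaves the linear span of the $K_{x}$, where every operator acts by an explicit finite-rank formula, so the domain questions are less visible; your direct route forces you to confront whether $\varphi K_{y}\in dom(T_{f}^{*})$, and your observation that this condition coincides exactly with membership of the left-hand side $\left(\varphi\circ f\right)\left(K_{y}\circ f\right)$ in $\mathscr{H}_{K}$ — so that the two sides of (\ref{eq:KL7}) have the same natural domain — is a genuine point of care that the paper glosses over (the paper likewise does not address whether $\varphi\circ f$ is itself a multiplier, i.e., whether $M_{\varphi\circ f}$ is everywhere defined). Your weak formulation, pairing against $K_{z}$ and reducing both sides to $\varphi\left(f\left(z\right)\right)K\left(f\left(z\right),y\right)$, is a clean way to package that bookkeeping and is in spirit exactly what the paper's commutative diagram encodes after taking adjoints.
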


\begin{proof}
It is clear that the conclusion (\ref{eq:KL7}) has the following
equivalent form: 
\begin{equation}
T_{f}M_{\left(\varphi\circ f\right)}^{*}=M_{\varphi}^{*}T_{f};\label{eq:KL8}
\end{equation}
and below we shall prove (\ref{eq:KL8}). 

Let $f$ and $\varphi$ be as specified in the theorem. We then have
the following commutative diagram: 
\begin{figure}[H]
\[
\xymatrix{\xyC{4pc}K\left(\cdot,x\right)\ar[r]\sb(0.4){M_{\left(\varphi\circ f\right)}^{*}}\ar[d]_{T_{f}} & \varphi\left(f\left(x\right)\right)K\left(\cdot,x\right)\ar[d]^{T_{f}}\\
K\left(\cdot,f\left(x\right)\right)\ar[r]\sb(0.4){M_{\varphi}^{*}} & \varphi\left(f\left(x\right)\right)K\left(\cdot,f\left(x\right)\right)
}
\]

\caption{\label{fig:KL2} Commutative diagram corresponding to (\ref{eq:KL8}).}
\end{figure}

In the verification of the assertions in \figref{KL2}, we used the
conclusions in Theorems \ref{thm:mc} and \ref{thm:mc2} above. 
\end{proof}

\section{\label{sec:kact}Neural Network-activation functions from p.d. kernels}

In the previous section we introduced the use of positive definite
kernels, and associated generating function for the NN algorithms.
Below we make use of the kernel analysis in design of the generating
NN functions.

The next definition makes use of the iterative generation of feedforward
functions as in the literature, e.g., \cite{MR4185345,MR4399726,MR3457582}.
The recursive steps used here in the definition and \lemref{c2} below
serve as applications of our general framework from Theorems \ref{thm:mc}
and \ref{thm:mc2} above.
\begin{defn}
Let $K$ be a positive definite kernel on $\mathbb{R}$. An $l$-layer
feedforward network with kernel $K$ is a function of the form 
\begin{multline*}
x\mapsto y_{1}=K\left(A_{1}x+b_{1},c_{1}\right)\mapsto y_{2}=K\left(A_{2}y_{1}+b_{2},c_{2}\right)\mapsto\cdots\\
\cdots\mapsto y_{l}=K\left(A_{l}y_{l-1}+b_{l},c_{l}\right)\mapsto y_{out}=K\left(\left\langle a_{l+1},y_{l}\right\rangle +b_{l+1},c_{l+1}\right)
\end{multline*}
where 
\begin{itemize}
\item $x\in\mathbb{R}^{n_{0}}$;
\item $A_{j}\in\mathbb{R}^{n_{j}\times n_{j-1}}$, $b_{j},c_{j}\in\mathbb{R}^{n_{j}}$
for $j=1,\cdots,l$;
\item $a_{l+1}\in\mathbb{R}^{n_{l}}$, $b_{l+1},c_{l+1}\in\mathbb{R}$;
\end{itemize}
and for vectors $x,y\in\mathbb{R}^{m}$, 
\[
K\left(x,y\right):=\left[K\left(x_{1},y_{1}\right),\cdots,K\left(x_{m},y_{m}\right)\right].
\]

\end{defn}

\begin{lem}
\label{lem:c2}Let $K\left(x,y\right)=\min\left(x,y\right)$, and
$a,b,c,d$ be nonzero constants. Then
\begin{enumerate}
\item $K\left(ax+b,c\right)=aK\left(x,a^{-1}\left(c-b\right)\right)+b$; 
\item $K\left(K\left(x,a\right),b\right)=K\left(x,K\left(a,b\right)\right)$; 
\item $K\left(dK\left(ax+b,c\right)+e,f\right)=daK\left(x,K\left(a^{-1}\left(c-b\right),a^{-1}\left(d^{-1}\left(f-e\right)-b\right)\right)\right)+db+e$.
\end{enumerate}
\end{lem}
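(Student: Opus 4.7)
The plan is to treat the three identities as successive applications of the same basic algebraic fact about $\min$, so the real work is only in (1); items (2) and (3) fall out by associativity and by iterating (1), respectively. Throughout I shall work under the implicit positivity assumption (e.g.\ $a,d>0$) that makes the stated identities literally correct, since $K(x,y)=\min(x,y)$ satisfies $\alpha\min(u,v)=\min(\alpha u,\alpha v)$ only when $\alpha>0$; the case with negative scalars is analogous with $\min$ replaced by $\max$ in the relevant step.

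For (1), I would just unfold definitions: $K(ax+b,c)=\min(ax+b,c)=\min(ax,c-b)+b$, and then pull $a$ outside the $\min$ to get $a\min(x,a^{-1}(c-b))+b=aK(x,a^{-1}(c-b))+b$. For (2), the identity $K(K(x,a),b)=K(x,K(a,b))$ is literally associativity of $\min$, i.e.\ $\min(\min(x,a),b)=\min(x,\min(a,b))$, so nothing else is needed.

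For (3), I would chain (1) and (2). First apply (1) to the inner expression:
\[
dK(ax+b,c)+e \;=\; d\bigl(aK(x,a^{-1}(c-b))+b\bigr)+e \;=\; da\,K(x,a^{-1}(c-b))+db+e.
\]
Now apply (1) again, this time to the outer $K$, with the linear transformation $u\mapsto da\,u+(db+e)$ playing the role of $ax+b$ and with $f$ in place of $c$:
\[
K\bigl(da\,K(x,a^{-1}(c-b))+db+e,\,f\bigr) \;=\; da\,K\!\left(K(x,a^{-1}(c-b)),\,(da)^{-1}(f-db-e)\right)+db+e.
\]
Next invoke (2) to push the outer $K$ inside, giving $K(x,K(a^{-1}(c-b),(da)^{-1}(f-db-e)))$. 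The only remaining step is to simplify the constant $(da)^{-1}(f-db-e)=a^{-1}(d^{-1}(f-e)-b)$, which is a one-line calculation.

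The main obstacle is purely bookkeeping: matching the constants on the right-hand side of (3), in particular verifying that $(da)^{-1}(f-db-e)$ equals the displayed form $a^{-1}(d^{-1}(f-e)-b)$. There is no analytic or operator-theoretic subtlety beyond keeping track of the sign of $a$ (and $d$) when pulling them across $\min$; under the natural positivity hypothesis the proof is a short chain of substitutions using only (1) and (2).
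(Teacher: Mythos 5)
Your proof is correct and follows essentially the same route as the paper's: part (1) by unfolding the definition of $\min$, part (2) as associativity of $\min$, and part (3) by chaining (1) and (2) (the paper applies the outer instance of (1) before expanding the inner one, but this reordering is immaterial). Your explicit caveat that $a$ and $d$ must be positive for scalars to pass through $\min$ is well taken --- the paper's case analysis in (1) silently assumes this, even though the lemma only hypothesizes the constants are nonzero.
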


\begin{proof}
~
\begin{enumerate}
\item $K\left(ax+b,c\right)=\begin{cases}
ax+b & x<a^{-1}\left(c-b\right)\\
c & x>a^{-1}\left(c-b\right)
\end{cases}$
\item Assume $a<b$, then 
\[
K\left(K\left(x,a\right),b\right)=\begin{cases}
x & x<a\\
a & x\geq a
\end{cases}=K\left(x,a\right).
\]
The case $a>b$ is similar. 
\item This follows from (1)--(2):
\begin{eqnarray*}
 &  & K\left(dK\left(ax+b,c\right)+e,f\right)\\
 & = & dK\left(K\left(ax+b,c\right),d^{-1}\left(f-e\right)\right)+e\\
 & = & dK\left(aK\left(x,a^{-1}\left(c-b\right)\right)+b,d^{-1}\left(f-e\right)\right)+e\\
 & = & d\left\{ aK\left(K\left(x,a^{-1}\left(c-b\right)\right),a^{-1}\left(d^{-1}\left(f-e\right)-b\right)\right)+b\right\} +e\\
 & = & daK\left(K\left(x,a^{-1}\left(c-b\right)\right),a^{-1}\left(d^{-1}\left(f-e\right)-b\right)\right)+db+e\\
 & = & daK\left(x,K\left(a^{-1}\left(c-b\right),a^{-1}\left(d^{-1}\left(f-e\right)-b\right)\right)\right)+db+e
\end{eqnarray*}
\end{enumerate}
\end{proof}
In what follows, all the networks are restricted to be defined on
compact subsets $\Omega$ in $\mathbb{R}^{d}$, e.g., $\Omega=\left[0,1\right]^{d}$
(hypercubes). This is in consideration of standard normalizations
in training neural networks. 

In Theorems \ref{thm:hk} and \ref{thm:hmu}, we present in detail
the particular \emph{relative Reproducing Kernel Hilbert Spaces} which
have as their respective dipole system (see (\ref{eq:c3})) the generalized
ReLu functions illustrated here in Figures \ref{fig:dp} and \ref{fig:rmu}.

Here we specify the kernel $K_{1}$ for Brownian motion $W$ indexed
by $\mathbb{R}$. As a result, the corresponding p.d. kernel on $\mathbb{R}\times\mathbb{R}$
is as follows: 
\begin{equation}
K_{1}\left(x,y\right)=\begin{cases}
\left|x\right|\wedge\left|y\right|=\min\left(\left|x\right|,\left|y\right|\right) & \text{if}\;xy\geq0\;\text{(so same sign)}\\
0 & \text{if \ensuremath{xy<0,} so opposite sign.}
\end{cases}\label{eq:c1}
\end{equation}

\begin{proof}
The connection between the kernel $K_{1}$ and the Brownian motion
$\left\{ W_{x}\right\} _{x\in\mathbb{R}}$ is as follows: 
\begin{equation}
K_{1}\left(x,y\right)=\mathbb{E}\left(\left(W_{x}-W_{0}\right)\left(W_{y}-W_{0}\right)\right)\label{eq:c2}
\end{equation}
for all $x,y\in\mathbb{R}$. The asserted formula (\ref{eq:c1}) follows
from this, combined with the independence of increments for Brownian
motion. 
\end{proof}
\begin{thm}
\label{thm:hk}Let $K_{1}$ be the p.d. kernel (\ref{eq:c1}) on $\mathbb{R}\times\mathbb{R}$,
with the corresponding RKHS 
\[
\mathscr{H}_{K_{1}}=\left\{ f:f'\in L^{2}\right\} ,\quad\left\Vert f\right\Vert _{\mathscr{H}_{K_{1}}}^{2}=\int\left|f'\right|^{2}d\lambda_{1}.
\]
On $\Omega=\left[0,1\right]^{d}$, consider the p.d. kernel 
\[
K_{d}\left(x,y\right)=K_{1}\left(x_{1},y_{1}\right)\cdots K_{1}\left(x_{d},y_{d}\right),
\]
so that 
\[
\mathscr{H}_{K_{d}}=\left\{ f:\nabla f\in L^{2}\right\} ,\quad\left\Vert f\right\Vert _{\mathscr{H}_{K_{d}}}^{2}=\int\left|\nabla f\right|^{2}d\lambda_{d},
\]
where $\lambda_{d}$ denotes the $d$-dimensional Lebesgue measure. 

Given $f:\Omega\rightarrow\mathbb{R}$, and a fixed $c\in\mathbb{R}$,
set 
\[
F:\mathbb{R}^{d}\rightarrow\mathbb{R}^{1},\quad F\left(x\right)=K_{1}\left(f\left(x\right),c\right).
\]
Then, 
\[
F\in\mathscr{H}_{K_{d}}\Longleftrightarrow\iint_{f^{-1}\left(\left[0,c\right]\right)}\left|\nabla f\right|^{2}d\lambda_{d}<\infty.
\]
\end{thm}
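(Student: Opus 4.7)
The plan is to give an explicit description of $F(x) = K_1(f(x),c)$ and then compute its weak gradient in closed form. Without loss of generality take $c>0$ (the cases $c=0$ and $c<0$ are trivial or symmetric). From (\ref{eq:c1}), the one-variable map $t \mapsto K_1(t,c)$ is precisely the $1$-Lipschitz clipping
\[
\Phi_c(t) \;=\; \max\bigl(0,\min(t,c)\bigr),
\]
which is piecewise affine with a.e.\ derivative $\Phi_c'(t) = \chi_{(0,c)}(t)$. So $F = \Phi_c \circ f$, and it splits into three pieces: $F\equiv 0$ on $\{f\le 0\}$, $F\equiv c$ on $\{f\ge c\}$, and $F=f$ on $\{0\le f\le c\}$.

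Next I would invoke the Stampacchia / Marcus--Mizel chain rule for compositions of Lipschitz functions with Sobolev functions to obtain, almost everywhere on $\Omega$,
\[
\nabla F(x) \;=\; \chi_{\{0<f(x)<c\}}\, \nabla f(x).
\]
Together with the characterization of $\mathscr{H}_{K_d}$ stated in the theorem, this yields
\[
\|F\|_{\mathscr{H}_{K_d}}^{2} \;=\; \int_{\Omega}|\nabla F|^{2}\,d\lambda_{d} \;=\; \int_{f^{-1}([0,c])} |\nabla f|^{2}\,d\lambda_{d},
\]
the contributions from the boundary sets $\{f=0\}$ and $\{f=c\}$ being zero by the standard fact that $\nabla f = 0$ a.e.\ on level sets of a Sobolev function. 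Both directions of the asserted equivalence are then immediate: $F\in \mathscr{H}_{K_d}$ is the same as $\nabla F\in L^{2}(\Omega)$, which is exactly the finiteness of the displayed integral.

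The main obstacle is the measure-theoretic rigor of the chain rule, together with the a priori question of what $\nabla f$ even means on $\{0<f<c\}$ when $f$ is merely given as a map $\Omega\to\mathbb{R}$. One route is to interpret the right-hand integrand through the distributional gradient of $\Phi_c\circ f$ (which is always defined as a distribution since $\Phi_c\circ f$ is bounded), and note that finiteness of $\int_{\{0<f<c\}}|\nabla f|^{2}$ is the statement that this distribution is an $L^{2}$-function. A cleaner conceptual route is to apply \thmref{mc2} directly with $X=Y=\Omega$, $K=L=K_{d}$, $y=c$: by \lemref{B1} the condition $L_{c}\in\mathrm{dom}(T_{f}^{*})$ is equivalent to the existence of a finite $C$ with $|\sum_i \xi_i F(x_i)|^{2}\le C\sum_{i,j}\xi_i\xi_j K_d(x_i,x_j)$, and one then shows, by the explicit tensor structure $K_{d}=K_{1}\otimes\cdots\otimes K_{1}$ together with the identification of $\mathscr{H}_{K_d}$ as $\{f:\nabla f\in L^{2}\}$, that the least such $C$ equals $\int_{f^{-1}([0,c])}|\nabla f|^{2}\,d\lambda_{d}$. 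Either route delivers the equivalence; the second has the advantage of fitting the problem into the general operator framework already developed, but the first is the most direct.
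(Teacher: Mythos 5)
The paper does not actually supply a proof of \thmref{hk}: the only nearby proof environment belongs to \thmref{hmu}, and it refers the reader back to ``the details in the proof of \thmref{hk}'', which are nowhere written out. So there is nothing to compare against line by line; your argument has to stand on its own, and in substance it does. Your identification of $t\mapsto K_{1}(t,c)$ with the clipping function $\Phi_{c}(t)=\max\bigl(0,\min(t,c)\bigr)$, whose a.e.\ derivative is $\chi_{(0,c)}$, followed by the Lipschitz chain rule $\nabla(\Phi_{c}\circ f)=\chi_{\{0<f<c\}}\,\nabla f$, is the multivariable analogue of exactly the computation the paper performs for the Brownian-motion kernel in one variable (where $\tfrac{d}{dy}K(\cdot,x)=\chi_{[0,x]}$), and it gives $\|F\|_{\mathscr{H}_{K_{d}}}^{2}=\int_{f^{-1}([0,c])}|\nabla f|^{2}\,d\lambda_{d}$ as claimed, the level sets $\{f=0\}$ and $\{f=c\}$ contributing nothing. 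This is surely the intended argument; your alternative route through \thmref{mc2} and \lemref{B1} is consistent with the operator framework of Section 2 but, as you note, ultimately reduces to the same energy computation.

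One point should be made explicit rather than left to your closing caveat, because it is a genuine issue and not just bookkeeping. The forward implication is safe: if $F\in\mathscr{H}_{K_{d}}$ then $F=f$ on the open set $\{0<f<c\}$, so $\nabla f$ is defined and square integrable there. The reverse implication, however, requires some regularity of $f$ across the thresholds $0$ and $c$. If $f$ is a completely arbitrary map, take $d=1$, $c=1$, and $f=-1$ on $[0,\tfrac12)$, $f=2$ on $[\tfrac12,1]$: then $f^{-1}([0,1])$ is empty, so the right-hand integral is $0$, yet $F=\Phi_{1}\circ f$ is a unit step and is not in $\mathscr{H}_{K_{1}}$. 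So either assume $f$ continuous (or locally Sobolev), as the Lipschitz hypothesis of \thmref{mc} would supply, or adopt your ``distributional'' reading in which the right-hand side \emph{means} that the distributional gradient of $\Phi_{c}\circ f$ is an $L^{2}$ function --- in which case the statement is true but nearly tautological. State which convention you are using; with that fixed, the proof is complete.
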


\begin{thm}
\label{thm:hmu}Let $\mu$ be a non-atomic $\sigma$-finite measure
on $\left(\mathbb{R},\mathscr{B}_{\mathbb{R}}\right)$, and consider
Stieltjes measures $dF$ on $\left(\mathbb{R},\mathscr{B}_{\mathbb{R}}\right)$
such that 
\begin{equation}
dF\ll\mu
\end{equation}
\textup{(absolutely continuous).} Then the relative RKHS $\mathscr{H}_{\mu}$
for the p.d. kernel 
\begin{equation}
K_{\mu}\left(A,B\right)=\mu\left(A\cap B\right)
\end{equation}
consists functions $F$ such that 
\begin{equation}
F\left(b\right)-F\left(a\right)=\left\langle v_{a,b}^{\left(\mu\right)},F\right\rangle _{\mathscr{H}_{\mu}}\label{eq:c3}
\end{equation}
\begin{equation}
\int_{\mathbb{R}}\left|\frac{dF}{d\mu}\right|^{2}d\mu<\infty
\end{equation}
where the relative kernels $v_{a,b}^{\left(\mu\right)}$ are as follows:
\[
\frac{dv_{a,b}^{\left(\mu\right)}}{d\mu}\left(x\right)=\chi_{\left[a,b\right]\left(x\right)},
\]
see \figref{rmu}. 
\end{thm}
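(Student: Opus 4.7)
The plan is to build $\mathscr{H}_{\mu}$ as an isometric copy of $L^{2}(\mu)$, with the identification realized through Radon--Nikodym derivatives, and then verify the reproducing identity by a one-line computation. The starting point is the factorization
\[
K_{\mu}(A,B)=\mu(A\cap B)=\int\chi_{A}\chi_{B}\,d\mu=\langle\chi_{A},\chi_{B}\rangle_{L^{2}(\mu)},
\]
which displays $K_{\mu}$ as a Gram kernel, hence positive definite. By the standard Aronszajn--Moore factorization, $\mathscr{H}_{K_{\mu}}$ embeds isometrically into $L^{2}(\mu)$ via $K_{\mu}(\cdot,A)\mapsto\chi_{A}$; because $\mu$ is non-atomic and $\sigma$-finite, the family $\{\chi_{A}:\mu(A)<\infty\}$ spans a dense subspace of $L^{2}(\mu)$, so the embedding is in fact onto.

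I would next pass from the set-kernel picture to the relative-RKHS picture on $\mathbb{R}$ by identifying each $g\in L^{2}(\mu)$ with the Stieltjes antiderivative $F$ determined by $dF=g\,d\mu$, i.e.\ $F(b)-F(a)=\int_{a}^{b}g\,d\mu$ (specified up to an additive constant, in the same spirit as the $f(0)=0$ normalization that appeared in the Brownian lemma). Under this identification, $g=dF/d\mu$, and the $L^{2}(\mu)$ norm translates precisely into the Sobolev-type expression $\int\lvert dF/d\mu\rvert^{2}\,d\mu$. The dipole $v_{a,b}^{(\mu)}$, defined by $dv_{a,b}^{(\mu)}/d\mu=\chi_{[a,b]}$, corresponds under this identification to $\chi_{[a,b]}\in L^{2}(\mu)$, and so lies in $\mathscr{H}_{\mu}$ whenever $\mu([a,b])<\infty$.

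With these correspondences in place, the relative reproducing formula reduces to
\[
\bigl\langle v_{a,b}^{(\mu)},F\bigr\rangle_{\mathscr{H}_{\mu}}=\int\chi_{[a,b]}\,\frac{dF}{d\mu}\,d\mu=\int_{a}^{b}dF=F(b)-F(a),
\]
which is essentially the argument for the Brownian kernel (equation (\ref{eq:a4})) with Lebesgue measure replaced by $\mu$ and ordinary derivative replaced by the Radon--Nikodym derivative. The main obstacle I anticipate is not the algebra but the bookkeeping required to promote $\mathscr{H}_{\mu}$ from a Hilbert space of set functions to an honest space of pointwise functions $F$ on $\mathbb{R}$ whose differences $F(b)-F(a)$ are reproduced by the dipoles $v_{a,b}^{(\mu)}$; non-atomicity of $\mu$ is exactly the hypothesis that guarantees the interval indicators $\{\chi_{[a,b]}\}$ are dense in $L^{2}(\mu)$, so that the relative dipole family suffices to distinguish and reconstruct the elements of $\mathscr{H}_{\mu}$ up to an additive constant.
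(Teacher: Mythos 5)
Your proposal is correct and follows essentially the route the paper intends: the paper's own ``proof'' is only a pointer to \cite{MR3251728} and to the Brownian-motion computation earlier in the text, and your argument is exactly that computation with Lebesgue measure replaced by $\mu$ and the weak derivative $f'$ replaced by the Radon--Nikodym derivative $dF/d\mu$, together with the standard Gram-kernel identification of $\mathscr{H}_{K_\mu}$ with (the closed span of indicators inside) $L^{2}(\mu)$. One small correction of emphasis: non-atomicity of $\mu$ is needed not so much for density of the interval indicators (for which $\sigma$-finiteness already suffices) as to make $F(b)-F(a)=\int_{[a,b]}dF$ independent of whether the endpoints $a,b$ carry mass, i.e.\ of the choice of left- or right-continuous version of $F$.
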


\begin{figure}[H]
\includegraphics[width=0.5\textwidth]{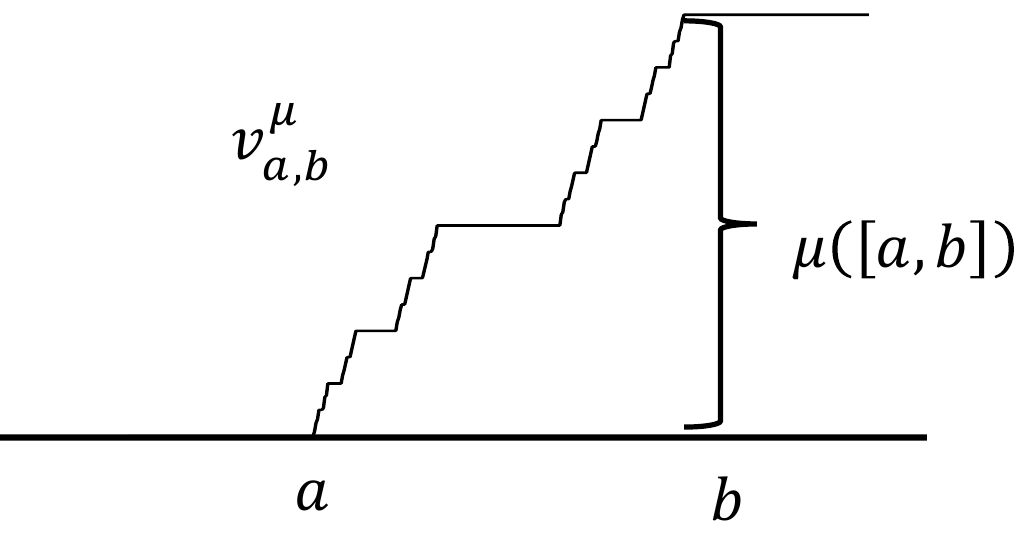}\caption{\label{fig:rmu}Illustration dipole functions that reproduce differences
of values of functions in the space $\mathscr{H}_{\mu}$. Compare
with \figref{dp} above.}

\end{figure}

\begin{proof}
See \cite{MR3251728,doi:https://doi.org/10.1002/9781119414421.ch2}
and the details in the proof of \thmref{hk}.
\end{proof}
\begin{rem}
The positive definite kernel $K_{\mu}$ which is ``responsible''
for the relative RKHS $\mathscr{H}_{\mu}$ is defined on $\mathscr{B}\times\mathscr{B}$,
where $\mathscr{B}$ denotes the Borel $\sigma$-algebra of subsets
of $\mathbb{R}$. Using \cite{doi:https://doi.org/10.1002/9781119414421.ch2},
one checks that 
\begin{equation}
K_{\mu}\left(A,B\right):=\mu\left(A\cap B\right)\;\text{for all \ensuremath{A,B\in\mathscr{B}.}}
\end{equation}
We further note that $K_{\mu}$ is the covariance for the generalized
$\mu$-Brownian motion $\{W_{A}^{\left(\mu\right)}\}_{A\in\mathscr{B}}$,
i.e., subject to 
\begin{equation}
\mathbb{E}\left(W_{A}^{\left(\mu\right)}W_{B}^{\left(\mu\right)}\right)=\mu\left(A\cap B\right)\;\text{for all \ensuremath{A,B}\ensuremath{\in\mathscr{B}}.}
\end{equation}
The corresponding Ito-lemma for $W^{\left(\mu\right)}$ is defined
for differentiable functions $f$ on $\mathbb{R}$ via 
\begin{equation}
f\left(W_{A}^{\left(\mu\right)}\right)-f\left(0\right)=\int_{A}f'\left(W_{t}^{\left(\mu\right)}\right)dW_{t}^{\left(\mu\right)}+\frac{1}{2}\int_{A}f''\left(W_{t}^{\left(\mu\right)}\right)\mu\left(dt\right).
\end{equation}
In particular, the measure $\mu$ is the quadratic variation of $W_{t}^{\left(\mu\right)}$. 
\end{rem}

\section{\label{sec:fwidth}Applications to fractal images}

In recent decades, it has become evident that fractal features arise
in diverse datasets, in time series and in image analysis, to mention
two. (See e.g., \cite{MR4472252,MR4320089}.) Perhaps the best known
examples of fractal features include precise symmetries of scales.
Via a prescribed system of affine maps, they take the form of self-similarity.
And a special case, includes iterated function systems (IFS), and
maximal-entropy measures, also called IFS measures. The more familiar
Cantor constructions, e.g., scaling by 3 or scaling by 4, are examples
of IFS measures. For each of these cases, the RKHS framework we present
in \secref{kact}, serve as ideal tools for such adapted NN algorithms.
In particular, this may be illustrated with large numbers of images,
say 5000 generated images, each one is a fractal, either 2D or 3D,
with random rotation, with zooming, and coloring; half of them have
scaling 3, the other half have scaling 4. This leads to training of
a network serving to classify the images by scaling factors.

In particular, the Cantor-type activation functions, or the cumulative
functions of Cantor-like measures (\figref{rmu}), have vanishing
derivatives over structured subintervals of $\left[0,1\right]$. This
feature may lead to several benefits in neural networks. For example,
such functions can introduce sparsity and regularization into the
network, which improves its generalization performance and reduces
the risk of overfitting. Additionally, these functions can make the
network more robust to noise and other perturbations in the input
data, which improves its performance on unseen data. Furthermore,
activation functions whose derivative is zero over subintervals allow
the network to learn more complex and non-linear patterns in the data.
This can improve the expressiveness and flexibility of the network,
making it more accurate and effective for a wider range of tasks.
Additionally, these functions can make the network easier to optimize
and train, since the gradient of the activation function is well-structured,
thus reduce the computational complexity and improve the convergence
rate of the training algorithm.

More generally, a neural network with a custom activation function
(see e.g. the dipoles in \figref{dp}) uses a non-standard activation
function with adjustable parameters that can be trained and optimized
during the learning process. This allows the network to learn more
complex and non-linear relationships between the input and output
data, which can improve the accuracy of the network's predictions.

The use of a custom activation function with trainable parameters
can be useful in a variety of applications, such as image recognition,
natural language processing, and time series forecasting. It can also
be used to improve the performance of other machine learning algorithms,
such as decision trees and support vector machines (see e.g., \cite{zbMATH01669138,MR4329806,MR2849119,MR3108145,MR2274418,MR2246374}).

Below we apply a custom activation function in a ConvNet to classify
fractal images. In this setting, the activation function should be
designed to capture the complex, self-similar patterns that are characteristic
of the fractal images. The network is trained on a dataset of fractal
images with corresponding labels. It is optimized using a gradient-based
algorithm, such as stochastic gradient descent. Once trained, the
network will be used to classify new fractal images and predict their
classes with high accuracy.

In the example below, a dataset\footnote{Available at https://www.kaggle.com/dsv/4791103.}
of 15,000 Cantor-like 3D images is generated in Mathematica. Parameters
of each image, such as zoom factor, viewing angle, and scaling factor,
are uniformly distributed. A sample of the images is shown in \figref{sample}. 

The images are split into three categories according to their scaling
factors, labeled as class ``1'', ``2'' and ``3'', respectively.
The entire dataset is divided into a training set (size = 10,000),
validation set (size = 2,500) and test set (size = 2,500). The task
is to train a ternary classifier using the training set, along with
the validation set (for model selection), whose performance is then
tested on the test images. 

In the experiment, a small ConvNet is implemented in Keras. Its architecture
is shown in \figref{cnn}. The loss and accuracy of the model are
recorded for 20 epochs (\figref{compare}).

In comparison with a standard Relu network (\figref{relu-act}) of
the same architecture, the use of Cantor-like activation is better
at reducing overfitting (\figref{cantor-act}); it is expected that
with a systematic hyperparameter tuning, such a network has the potential
to outperform Relu networks in certain applications. 

\begin{figure}[H]
\begin{tabular}[t]{>{\centering}p{0.1\columnwidth}>{\centering}m{0.25\columnwidth}>{\centering}m{0.25\columnwidth}>{\centering}m{0.25\columnwidth}}
Class 1 & \includegraphics[width=0.25\textwidth]{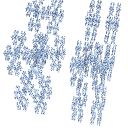} & \includegraphics[width=0.25\textwidth]{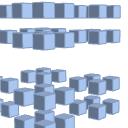} & \includegraphics[width=0.25\textwidth]{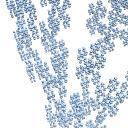}\tabularnewline
Class 2 & \includegraphics[width=0.25\textwidth]{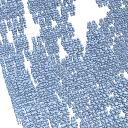} & \includegraphics[width=0.25\textwidth]{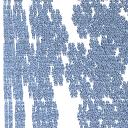} & \includegraphics[width=0.25\textwidth]{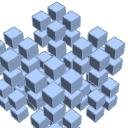}\tabularnewline
Class 3 & \includegraphics[width=0.25\textwidth]{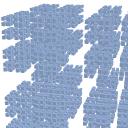} & \includegraphics[width=0.25\textwidth]{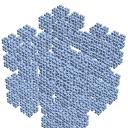} & \includegraphics[width=0.25\textwidth]{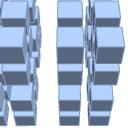}\tabularnewline
\end{tabular}\caption{\label{fig:sample} A random sample of the dataset of 3D Cantor images.}
\end{figure}

\begin{figure}[H]
\begin{tabular}{>{\raggedright}p{0.5\textwidth}>{\raggedright}p{0.25\textwidth}>{\raggedright}p{0.15\textwidth}}
Model: \textquotedbl Cantor-activation\textquotedbl{} &  & \tabularnewline
\hline 
Layer (type) & Output Shape & Param \#\tabularnewline
\hline 
input\_1 (InputLayer) & (None, 128, 128, 3) & 0\tabularnewline
rescaling (Rescaling) & (None, 128, 128, 3) & 0\tabularnewline
conv2d (Conv2D) & (None, 126, 126, 16) & 448\tabularnewline
max\_pooling2d (MaxPooling2D) & (None, 63, 63, 16) & 0\tabularnewline
conv2d\_1 (Conv2D) & (None, 61, 61, 32) & 4640\tabularnewline
max\_pooling2d\_1 (MaxPooling2D) & (None, 30, 30, 32) & 0\tabularnewline
conv2d\_2 (Conv2D) & (None, 28, 28, 64) & 18496\tabularnewline
max\_pooling2d\_2 (MaxPooling2D) & (None, 14, 14, 64) & 0\tabularnewline
conv2d\_3 (Conv2D) & (None, 12, 12, 128) & 73856\tabularnewline
flatten (Flatten) & (None, 18432) & 0 		 	\tabularnewline
dense (Dense) & (None, 3) & 55299\tabularnewline
\hline 
\end{tabular}

\caption{\label{fig:cnn} A ConvNet for fractal image classification.}
\end{figure}

\begin{figure}[H]
\subfloat[\label{fig:relu-act}Relu activation]{%
\begin{tabular}{c}
\includegraphics[width=0.45\textwidth]{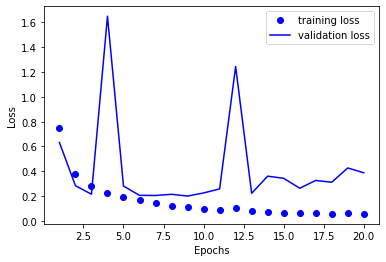}\tabularnewline
\includegraphics[width=0.45\textwidth]{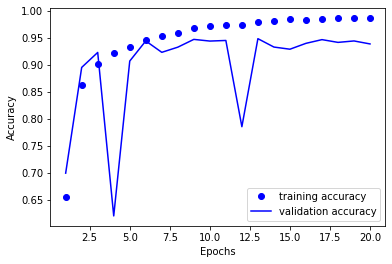}\tabularnewline
\end{tabular}

}\hfill{}\subfloat[\label{fig:cantor-act}Cantor-like activation]{%
\begin{tabular}{c}
\includegraphics[width=0.45\textwidth]{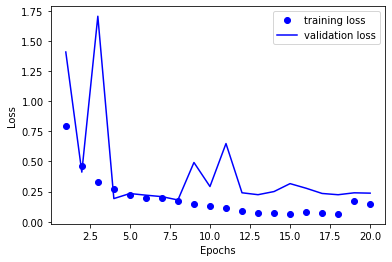}\tabularnewline
\includegraphics[width=0.45\textwidth]{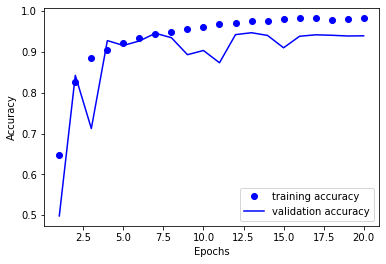}\tabularnewline
\end{tabular}

}

\caption{\label{fig:compare} Training loss and validation loss, illustrated
with use of a ConvNet.}
\end{figure}

\pagebreak{}

\bibliographystyle{amsalpha}
\bibliography{ref}

\end{document}